\newcommand{\pol}{\pi}
\newcommand{\Pol}{\Pi}
\newtheorem{lemma}{Lemma}
\newtheorem{remark}{Remark}
\newtheorem{prop}{Proposition}
\DeclareMathOperator*{\argmin}{arg\,min}
\newcommand{\R}{\mathbb{R}}
\newcommand{\N}{\mathbb{N}}
\newcommand{\fcnote}[1]%
    {\textcolor{orange}{\textbf{FC: #1}}}
\newcommand{\twnote}[1]%
    {\textcolor{cyan}{\textbf{TW: #1}}}
\newcommand{\aanote}[1]%
    {\textcolor{blue}{\textbf{AA: #1}}}
\newcommand{\ksnote}[1]%
    {\textcolor{red}{\textbf{KS: #1}}}
\newlength\tindent
\let\sp=\sqparen
\newcommand{\msnote}[1]%
    {\textcolor{cyan}{\textbf{MS: #1}}}
\newtheorem{assumption}{Assumption}
\newtheorem{definition}{Definition}
\newcommand{\KL}{\mathcal{K}{\mathcal{L}}}
\newcommand{\K}{\mathcal{K}}
\newtheorem{theorem}{Theorem}
\newcommand{\U}{\mathcal{U}}
\newcommand{\X}{\mathcal{X}}
\newcommand{\lyap}{W}
\title{Lyapunov Design for Robust and Efficient Robotic Reinforcement Learning}
\author{
  Tyler Westenbroek$^{1,*}$\\
  \texttt{westenbroekt@berkeley.edu} \\
  \And
 Fernando Casta\~neda$^{2,*}$ \\
  \texttt{fcastaneda@berkeley.edu}
  \And
  Ayush Agrawal$^{2,*}$\\
  \texttt{ayush.agrawal@berkeley.edu} \\
  \And
  Shankar Sastry$^{1}$\\
  \texttt{sastry@coe.berkeley.edu}
  \And
  Koushil Sreenath$^2$\\
  \texttt{koushils@berkeley.edu}
  \\
  $^1$Department of Electrical Engineering and Computer Sciences, UC Berkeley \\
  $^2$Department of Mechanical Engineering, UC Berkeley \\
  $^*$ Equal Contribution
  \thanks{This work was supported in part by NSF Grants CMMI-1944722 and CMMI-1931853, LOGiCS (Learning-Driven Oracle-Guided Compositional Symbiotic Design of Cyber-Physical Systems), and Defense Advanced Research Projects Agency award number FA8750-20-C-0156. The work of Fernando Casta\~neda was partially supported through a fellowship from Fundación Rafael del Pino, Spain.}
}
\begin{document}

\maketitle

\vspace{-2em}
\begin{abstract}
    Recent advances in the reinforcement learning (RL) literature have enabled roboticists to automatically train complex policies in simulated environments. However, due to the poor sample complexity of these methods, solving RL problems using real-world data remains a challenging problem. This paper introduces a novel cost-shaping method which aims to reduce the number of samples needed to learn a stabilizing controller. The method adds a term involving a Control Lyapunov Function (CLF) -- an `energy-like' function from the model-based control literature -- to typical cost formulations. Theoretical results demonstrate the new costs lead to stabilizing controllers when smaller discount factors are used, which is well-known to reduce sample complexity. Moreover, the addition of the CLF term `robustifies' the search for a stabilizing controller by ensuring that even highly sub-optimal polices will stabilize the system. We demonstrate our approach with two hardware examples where we learn stabilizing controllers for a cartpole and an A1 quadruped with only seconds and a few minutes of fine-tuning data, respectively. Furthermore, simulation benchmark studies show that obtaining stabilizing policies by optimizing our proposed costs requires orders of magnitude less data compared to standard cost designs.
\end{abstract}

\begin{figure}[h!]
\vspace{-1em}
     \centering
     \begin{subfigure}[b]{0.92\textwidth}
         \centering
         \includegraphics[width=\textwidth]{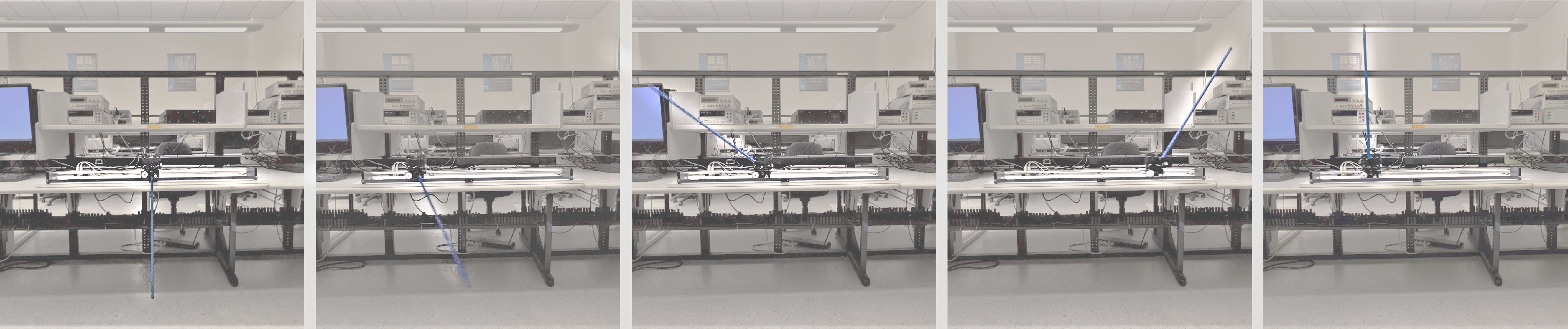}
        %  \caption{}
         \label{fig:cartpole_cover}
         \vspace{-10pt}
     \end{subfigure}
     \begin{subfigure}[b]{0.92\textwidth}
         \centering
         \includegraphics[width=\textwidth]{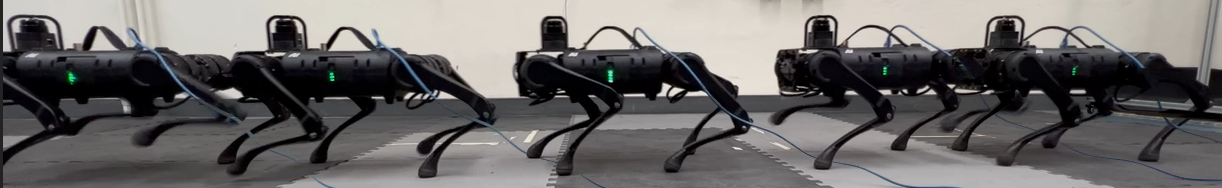}
        %  \caption{}
         \label{fig:a1_cover}
         
     \end{subfigure}
     \vspace{-1em}
        \caption{\small We learn precise stabilizing policies on hardware for the \texttt{Quanser} cartpole  \cite{quanser_products_2021} (top) and the \texttt{Unitree A1} quadruped \cite{unitree} (bottom) using only seconds and a few minutes of real-world data, respectively. A video of our experiments can be found here \url{https://youtu.be/l7kBfitE5n8} }
        \label{fig:cover}
        \vspace{-1em}
\end{figure}
\vspace{-0.5em}
\section{Introduction}
A key challenge in robotics is reasoning about the long-horizon behavior induced by a control policy. This is because important system properties such as stability are inherently long-horizon phenomena. In reinforcement learning (RL), the \emph{discount factor} implicitly controls how far into the future policy optimization algorithms plan when optimizing the objective specified by the user. Standard approaches to designing objective functions for robotic RL, such as penalizing the distance to a reference trajectory, inherently require a large discount factor to learn control policies which stabilize the system \cite{postoyan2016stability,gaitsgory2015stabilization}. Unfortunately, problems with large discount factors can be extremely difficult to solve, often requiring vast data sets and careful tuning of hyper-parameters \cite{franccois2015discount}. As a number of recent success stories have demonstrated \cite{lee2020learning, kumar2021rma, peng2020learning, peng2018sim, li2021reinforcement, belkhale2021model}, ever-increasing computational resources can be used to solve these problems in simulation and deploy the resulting controllers directly on the real-world system. However, because it is impractical to model every detail of complex hardware platforms, achieving the best performance will require learning from real-world data.

%\twnote{lets cite as many people as possible here so they cite us back :)}
This paper introduces a cost-shaping framework which enables users to reliably learn stabilizing control policies with small amounts of real-world data by solving problems with small discount factors. Our approach uses \emph{Control Lyapunov Functions} (CLFs), a standard design tool from the control theory literature \cite{artstein1983stabilization, SONTAG1989117, ames_clf_manipulation,ames2014rapidly}. CLFs are `energy-like' functions for the system which reduce the search for a stabilizing controller to a myopic one-step criterion. In particular, any controller which decreases the energy of the CLF at each instance of time will stabilize the system. Thus, CLFs reduce the long-horizon objective of stabilizing the system to a simple one-step condition. When a CLF is available and the dynamics are known, constructive techniques from the control literature can be used to synthesize a stabilizing controller. However, when there is uncertainty in the dynamics, it is difficult to guarantee that a controller will always decrease the value of the CLF, or that we have even designed a true CLF for the system. 

Our approach is to $1)$ design an approximate
CLF for the real-world system using an approximate dynamics model and $2)$ modify the `standard' choice of cost functions mentioned above by adding a term which incentivizes controllers which decrease the approximate CLF over time. This technique effectively uses the approximate CLF as supervision for reinforcement learning, enabling the user to embed known system structures into the learning process while retaining the flexibility of RL to overcome unknown dynamics. Indeed, as our analysis demonstrates, when our approach is used reinforcement learning algorithms implicitly learn to `correct' the approximate CLF provided by the user. When the candidate CLF is close to being a true CLF for the system (in a sense we make precise below), a stabilizing controller can be efficiently learned by solving a problem with a small discount factor. Moreover, the addition of the approximate CLF `robustifies' the search for a stabilizing controller by ensuring that even highly suboptimal policies will stabilize the system. Finally, in situations where it is too difficult to design a nominal CLF by hand, we demonstrate how one can be learned using a simulation model and the standard style of RL objective discussed above. Specifically, we use the value function learned by the RL algorithm as an approximate CLF for the real-world system. 
Altogether, beyond accelerating and robustifying RL, our approach also expands the applicability of CLF-based design techniques.

We apply this technique to develop data-efficient fine-tuning strategies, wherein a nominal controller developed using a simulation model is refined with small amounts of real-world data.  For the A1 experiment, the nominal controller is a model-based control architecture \cite{da2021learning}, and we hand-design a CLF using a highly simplified linearized reduced-order model for the system. Even though this model is very crude, we are nonetheless able to learn a precise tracking controller for this 18 DOF system with only 5 minutes of real-world data. 
For the cartpole swing-up task we used the value function from a simulation-based RL problem as the candidate CLF for the real-world system, using the learning process described above. Our fine-tuning approach then learned a robust swing-up controller after observing only one 10 second trajectory from the real-world system.

\vspace{-0.6em}
\subsection{Related Work}
We outline how our approach departs from related work; Appendix \ref{sec:extra_lit} contains further discussion. 
\noindent
\textbf{Discount Factors, Sample Complexity and Reward Shaping:} It is well-understood that the discount factor has a significant effect on the size of the data set that RL algorithms need to achieve a desired level of performance. Specifically, it has been shown in numerous contexts \cite{bertsekas1996neuro, schulman2017proximal, munos2008finite,postoyan2019stability} that smaller discount factors lead to problems which can be solved more efficiently. This has led to a number of works which explicitly treat the discount factor as a parameter which can be used to control the complexity of the problem alongside reward shaping techniques \cite{jiang2015dependence,petrik2008biasing,franccois2015discount,tessler2020reward,cheng2021heuristic,ng1999policy}. Compared to these works, our primary contribution is to demonstrate how CLFs can be combined with model-free algorithms to rapidly learn stabilizing controllers for robotic systems.

\noindent
\textbf{Fine-tuning with Real World Data: } Recently, there has been much interest in using RL to fine-tune policies which have been pre-trained in simulation \cite{smith2021legged,julian2020never,julian2020efficient,mandi2022effectiveness}. These methods typically optimize the same cost function with a large discount factor in both simulation and on the real robot. In contrast, using our cost reshaping techniques, we solve a different problem with a smaller discount factor on hardware which can be solved more efficiently.
% To illustrate this point, we compare our method to more typical RL-based fine-tuning approaches in Appendix \ref{sec:experiment_details}. We show that our method outperforms typical fine-tuning approaches under moderate perturbations to the dynamics model. 
In Appendix \ref{sec:experiment_details}, we show that our method outperforms typical fine-tuning approaches under moderate perturbations to the dynamics model. 

\noindent
\textbf{Learning with Control Lyapunov Functions:} A number of recent works have also tried to overcome the reality gap using data-driven methods to improve CLF-based controllers \cite{taylor2019episodic,taylor2019control,westenbroek2020learning,westenbroek2021combining,castaneda2020gaussian,choi2020reinforcement}. While these methods work well when a true CLF for the real-world system is available, our method is more general as we can still efficiently learn stabilizing controllers when only an approximate CLF is available by modulating the discount factor used to optimize our cost.

\vspace{-0.5em}
% \begin{comment}
% \subsection{Notation and Terminology}
% \twnote{can move a lot of this to the appendix}

% We say that a function $W \colon \R^n \to \R$ is \emph{positive definite} if $W(0) =0 $ and $W(x)>0$ if $x \neq 0$. Let $\beta \colon \left[0,\infty \right) \to \[0,\infty)$ be a continuous function. We say that $\alpha$ is in class $\K$ (denoted $\alpha \in \K$) if $\alpha(0) =0$ and $\alpha$ is strictly increasing. If in addition we have $\alpha(r)\to \infty$ as $r \to \infty$ when we say that $\alpha$ is in class $\K_\infty$ (denoted $\alpha \in \K_\infty$). Let $\beta \colon \left[0,\infty\right) \times \left[0,\infty\right)$ be a continuous function. We say that $\beta$ is in class $\KL$ if for each fixed $t \in \left[0,\right)$ the function $\beta(\cdot,t)$ is in class $\K$ and for each fixed $r \in \left[0,\infty \right)$ we have $\beta(r,t) \to 0$ as $t \to \infty$. 

% \end{comment}

\section{Background and Problem Setting} 
Throughout the paper we will consider deterministic discrete-time systems of the form:
\begin{equation}\label{eq:dynamics}
    x_{k+1} = F(x_k, u_k),
\end{equation}
where $x_k \in \X \subset \R^n$ is the state at time $k$, $u_k \in \U \subset \X$ is the input applied to the system at that time, and $F \colon \X \times \U \to \R^n$ is the transition function for the system. This general nonlinear model is broad enough to cover many important continuous control tasks for robotics. We will let $\Pi$ denote the space of all control polices $\pi \colon \X \to \U$ for the system. To ease exposition, for our theoretical analysis we will focus on the case where the goal is to stabilize the system to a single point, namely the origin. Through our examples we will demonstrate how our cost-shaping technique can be leveraged to achieve more complicated tasks, and in Section \ref{sec:limitations} we outline a path for extending our theoretical results to these settings in future work.

\vspace{-0.5em}
\subsection{Control Lyapunov Functions}
Control Lyapunov Functions \cite{artstein1983stabilization, SONTAG1989117, ames_clf_manipulation,ames2014rapidly}  are `energy-like' functions for the dynamics \eqref{eq:dynamics}:

\begin{definition}
We say that a positive definite function $\lyap \colon \R^n \to \R$ is a \emph{Control Lyapunov Function} (CLF) for \eqref{eq:dynamics} if the following condition holds for each $x \in \X \backslash \{0\}$:  
\begin{equation}\label{eq:clf}
    \min_{u \in \U} \lyap(F(x,u)) - \lyap(x) < 0.
\end{equation}
\end{definition}
% \vspace{-0.3em}
The condition \eqref{eq:clf} ensures that for each $x \in \X$ there exists a choice of input which decreases the `energy' $W(x)$. 
Any policy which satisfies the one-step condition $W(F(x,\pi(x))) - W(x) <0$ can be guaranteed to asymptotically stabilize the system \cite{kellett2003results} (see Appendix \ref{sec:stability} for background on stability theory). Given a CLF for the system, model-based methods constructively synthesize a controller which satisfies this property using either closed-form equations \cite{SONTAG1989117} or by solving an online (convex) optimization problem \cite{freeman2008robust,ames2014rapidly} to satisfy \eqref{eq:clf}. However, when the dynamics are unknown it is difficult to ensure that we have synthesized a `true' CLF for the system. 

\begin{remark} (\textit{Designing Control Lyapunov Functions})
While there is no general procedure for designing CLFs by hand for general nonlinear systems, there do exist constructive procedures for designing CLFs for many important classes of robotic systems, such as manipulator arms \cite{ames_clf_manipulation} and robotic walkers \cite{ames2014rapidly} using structural properties of the system. Moreover, in our examples we will investigate how a CLF can be learned from a simulation model and how very coarse CLF candidates can be used to accelerate learning a stabilizing controller. 
\end{remark}
\begin{comment}
 \begin{remark} (Effects of Model Uncertainty)
 An additional complication arises when there is a discrepancy between the dynamics of the real robot and the model used to design the CLF. Fortunately, CLFs are known to be robust under moderate dynamics perturbations \cite{ledyaev1999lyapunov, taylor2019episodic}. Thus, a CLF constructed using a simulation model can often be used as a `good guess' for a CLF for the real-world system even when the simulation model does not perfectly match reality. 
 \end{remark}
\end{comment}
% \vspace{-0.5em}
\subsection{Stability of Dynamic Programming and Reinforcement Learning} \label{sec:og_cost}
Here we investigate how a common class of cost functions found in the literature can be used to learn stabilizing controllers. 
In particular, we consider a running cost $\ell \colon \X \times \U \to \R$ of the form $\ell(x,u) = Q(x) + R(u)$, where $Q \colon \mathcal{X} \to \R$ is the state cost and $R\colon \U \to \R$ is the input cost. Both $Q$ and $R$ are assumed to be positive definite (in practice, both are usually quadratic). Given a  policy $\pol \in \Pol$, discount factor $\gamma \in \sp{0,1}$, and initial condition $x_0 \in \X$, the associated long-run cost is: 
\begin{align}\label{eq:cost1}
 V_\gamma^\pol(x_0) =& \sum_{k=0}^{\infty}\gamma^{k} \ell(x_k,\pol(x_k)) \\ \nonumber
    & \text{s.t. }  x_{k+1} = F(x_k,\pol(x_k)),
\end{align}
where $V_\gamma^{\pol} \colon \X \to \R\cup \{\infty\}$ is the \emph{value function} associated to $\pol$. Small discount factors incentivize policies which greedily optimize a small number of time-steps into the future, while larger discount factors promote policies which reduce the cost in the long-run. We say that a policy $\pol_\gamma^* \in \Pol$ is $\emph{optimal}$ if it achieves the smallest cost from each $x \in \X$: 
\begin{equation*}
    V_\gamma^{\pi_\gamma^*}(x) = V_\gamma^*(x):= \inf_{\pol \in \Pol} V_\gamma^\pol(x), \ \ \ \ \forall x \in \X,
\end{equation*}
where $V_\gamma^* \colon \X \to \R \cup \{\infty\}$ is the \emph{optimal value function}. Together $V_\gamma^*$ and $\pi_\gamma^*$ capture the `ideal' behavior induced by the cost function \eqref{eq:cost1}. It is well-known \cite{bertsekas1996neuro} that the optimal value function will satisfy the Bellman equation: 
\begin{equation}\label{eq:bad_bellman}
    V_\gamma^{*}(x) = \inf_{u \in \U} \big[\gamma V_\gamma^*(F(x,u))  + \ell(x,u)\big] ,\ \ \ \ \forall x \in \X,
\end{equation}
and an optimal policy $\pi_\gamma^*$ will satisfy  $\pi_\gamma^*(x) \in \argmin_{u \in \U} \big[ \gamma V_\gamma^*(F(x,u)) + \ell(x,u)\big]$, $\forall x \in \X$. Unfortunately, it is impractical to directly search over $\Pi$ to find a policy which meets these conditions. This necessitates the use of function approximation schemes (e.g. feed-forward neural networks) to instead represent a subset of policies $\hat{\Pol} \subset \Pol$ to search over. Indeed, modern RL approaches for robotics randomly sample the space of trajectories to optimize problems of the form:
\begin{equation}\label{eq:rl1}
 \inf_{\pol \in \hat{\Pol}} \mathbb{E}_{x_0 \sim X_0} \big[V_\gamma^\pi(x_0)\big],
\end{equation}
where $X_0$ is a distribution over initial conditions. While this approach enables these methods to optimize high-dimensional policies, they are data-hungry, can display high-variance and thus frequently return highly sub-optimal policies when data is limited. To better understand the effect that this has on the stability of learned policies, for each $\pol \in \hat{\Pol}$ and $\gamma \in \sp{0,1}$ define the \emph{optimality gap}:
\begin{equation*}
    \epsilon_\gamma^\pol(x) = V_\gamma^\pol(x) - V_\gamma^*(x).
\end{equation*}
The temporal difference equation \cite{bertsekas1996neuro} dictates that for each $x \in \X$ the policy satisfies:
\begin{equation}\label{eq:bellman}
    V_\gamma^\pi(x)  = \gamma V_\gamma^\pi(F(x,\pi(x))) + \ell(x,\pi(x)). 
\end{equation}
From these equations we can obtain:
\begin{align}\label{eq:algebra1}
 V_\gamma^{\pol}(F(x,\pol(x)))  - V_\gamma^\pi(x)& =\frac{1}{\gamma} \big(-\ell(x,\pi(x)) + (1-\gamma)V_\gamma^{\pol}(x) \big)\\
& = \frac{1}{\gamma}\big(-\ell(x,\pi(x)) + (1-\gamma)[V_\gamma^{*}(x) + \epsilon_\gamma^\pi(x)]\big)\\ \label{eq:decay1}
&\leq 
 \frac{1}{\gamma} \big(-Q(x) + (1-\gamma)[V_\gamma^*(x) + \epsilon_\gamma^\pol(x)]\big),
\end{align}
where we have first rearranged \eqref{eq:bellman}, then used $V_\gamma^\pi(x) = V_\gamma^*(x) + \epsilon_\gamma^\pi(x)$, and finally we have used $\ell(x,\pi(x)) \geq Q(x)$. Inequalities of this sort are the building block for proving the stability of suboptimal polices in the dynamic programming literature \cite{gaitsgory2015stabilization,postoyan2016stability}. 

\begin{remark} \textit{(Value Functions as CLFs)} \label{rmk:value}
By inspecting the cost \eqref{eq:cost1} we see that $ V_\gamma^\pi$ is positive definite (since $Q$ is positive definite). Thus, if the right-hand side of \eqref{eq:decay1} is negative for each $x \in \X \setminus \{0\}$, this inequality shows that $V_\gamma^\pi$ is a CLF for \eqref{eq:dynamics}, and that $\pol$ is an asymptotically stabilizing control policy. In other words, $V_\gamma^\pi$ is a CLF which is implicitly learned during the training process. Indeed, many RL algorithms directly learn an estimate of the value function, a fact which we later exploit to learn a CLF for the cartpole swing up-task in Section \ref{sec:examples} using the nominal simulation environment.
\end{remark} 

 Note that the right hand side of \eqref{eq:decay1} will only be negative if $V_\gamma^*(x) + \epsilon_\gamma^\pi(x)<\frac{1}{1-\gamma}Q(x)$. Since from \eqref{eq:cost1} we know that $V_\gamma^*(x)>Q(x)$ for each $x \in \X$, even the optimal policy (which has no optimality gap) will only be stabilizing if $\gamma$ is large enough. On the other hand, for a fixed $\gamma \in \left(0,1\right]$, this inequality also quantifies how sub-optimal a policy can be while maintaining stability. To make these observations more quantitative we make the following assumption:
\begin{assumption}\label{asm:value_bound1}
For each $\gamma \in \sp{0,1}$ there exists $C_\gamma \geq 1$ such that $V_\gamma^*(x) \leq C_\gamma Q(x)$ for each $x \in \X$. 
\end{assumption}
 
Growth conditions of this form are standard in the literature on the stability of approximate dynamic programming \cite{lincoln2006relaxing,postoyan2016stability,gaitsgory2015stabilization,gaitsgory2016stabilization}. Note that, because the running cost $\ell$ is non-negative, we have $C_{\gamma'} \leq C_{\gamma''}$ if $\gamma' \leq \gamma ''$. In particular, the constant $C_1$ upper-bounds the ratio between the one-step cost and the optimal undiscounted value function. When $C_1$ is smaller, the optimal undiscounted policy is more `contractive' and approximate dynamic programming methods converge more rapidly to an optimal solution \cite{lincoln2006relaxing}. Thus, intuitively the constants $C_\gamma \geq 1$ will be smaller when the system is easier to stabilize. The following result is essentially a specialization of the main result from \cite{gaitsgory2016stabilization}:
 \begin{prop}\label{prop1} Let Assumption \ref{asm:value_bound1} hold and let $\gamma \in \sp{0,1}$ and $\pi \in \hat{\Pi}$ be fixed. Further assume that there exists $\delta >0$ such that for each $x \in \X$ we have $i)$ $\epsilon_{\gamma}^\pi(x) \leq \delta Q(x)$ and $ii)$ $C_\gamma+\delta<\frac{1}{1-\gamma}$. Then, $\pi$ asymptotically stabilizes \eqref{eq:dynamics}. 
\end{prop}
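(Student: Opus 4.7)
The plan is to use the algebraic inequality \eqref{eq:decay1} already derived in the text, combine it with the two hypotheses, and conclude that $V_\gamma^\pi$ itself serves as a CLF along trajectories of $\pi$, which by the CLF stability result cited after the CLF definition implies asymptotic stability.

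First I would start from \eqref{eq:decay1}, namely
\begin{equation*}
V_\gamma^{\pi}(F(x,\pi(x))) - V_\gamma^\pi(x) \;\leq\; \frac{1}{\gamma}\bigl(-Q(x) + (1-\gamma)[V_\gamma^*(x) + \epsilon_\gamma^\pi(x)]\bigr),
\end{equation*}
which holds for every $x \in \X$. Next I would substitute the upper bounds on the two terms in the bracket: Assumption \ref{asm:value_bound1} gives $V_\gamma^*(x) \leq C_\gamma Q(x)$, and hypothesis (i) gives $\epsilon_\gamma^\pi(x) \leq \delta Q(x)$. Collecting these yields
\begin{equation*}
V_\gamma^{\pi}(F(x,\pi(x))) - V_\gamma^\pi(x) \;\leq\; \frac{Q(x)}{\gamma}\bigl[(1-\gamma)(C_\gamma + \delta) - 1\bigr].
\end{equation*}
Hypothesis (ii) says exactly that the bracket on the right is strictly negative, so for every $x \in \X \setminus \{0\}$ (using that $Q$ is positive definite) we obtain $V_\gamma^\pi(F(x,\pi(x))) - V_\gamma^\pi(x) < 0$.

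To close the argument I would invoke the standard Lyapunov-based stability statement mentioned right after the CLF definition (and detailed in Appendix \ref{sec:stability}): since $V_\gamma^\pi$ is positive definite (because $Q$ is positive definite and $V_\gamma^\pi \geq Q$ by \eqref{eq:cost1}) and strictly decreases along the closed-loop trajectories induced by $\pi$ at every nonzero state, $V_\gamma^\pi$ is a valid Lyapunov function for the closed-loop system $x_{k+1} = F(x_k,\pi(x_k))$, and hence the origin is asymptotically stable.

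The only genuinely delicate point, rather than the algebra itself, is that strict one-step decrease of $V_\gamma^\pi$ together with positive definiteness is enough for asymptotic stability; this is where I would cite the result appealed to in the paragraph following \eqref{eq:clf} (see \cite{kellett2003results} and Appendix \ref{sec:stability}). A small technical caveat worth flagging: the argument implicitly requires $V_\gamma^\pi(x) < \infty$ on the relevant sublevel sets, which is guaranteed by hypothesis (i) together with Assumption \ref{asm:value_bound1} since $V_\gamma^\pi(x) \leq (C_\gamma + \delta) Q(x)$. No additional work beyond this substitution and the invocation of the standard CLF-to-stability implication is needed.
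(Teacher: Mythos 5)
Your proof is correct and follows essentially the same route as the paper's: combine \eqref{eq:decay1} with Assumption \ref{asm:value_bound1} and hypotheses $i)$--$ii)$ to get a strict one-step decrease of $V_\gamma^\pi$ proportional to $-Q(x)$, then invoke the Lyapunov argument. You actually spell out more than the paper does (positive definiteness of $V_\gamma^\pi$, finiteness, and the explicit appeal to the Lyapunov stability result of Appendix \ref{sec:stability}), which fills in steps the paper leaves implicit.
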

\begin{proof}
Combining conditions $i)$ and $ii)$ with equation \eqref{eq:decay1} yields:
\begin{equation*}\label{eq:blah}
    V_\gamma^\pi(F(x,\pi(x))) - V_\gamma^\pi(x) \leq \frac{2}{\gamma}\big(-1 + (1-\gamma)[C_\gamma + \delta]\big)Q(x). \qedhere
 \end{equation*}
 Thus the RHS of the preceding equation will be negative-definite if $C_\gamma + \delta<\frac{1}{1-\gamma}$, which demonstrates the desired result. 
 \end{proof}
\vspace{-.3em}
\begin{remark}\label{rmk:stability} (Stability Properties of the Cost Function) In the following section we will derive an analogous result to Proposition \ref{prop1} for the novel reshaped cost function we propose below. When comparing these results we will primarily focus on the effect of the constants $C_\gamma\geq1$ (and the equivalent constants for the new setting). The  $C_\gamma$ constants can be used to bound how large of a discount factor is need to stabilize the system. In particular, Proposition \ref{prop1} implies that the optimal policy will stabilize the system for each $\gamma$ which satisfies $\gamma >1- \frac{1}{C_\gamma}$. The $C_\gamma$ constants  also characterizes how `robust' the cost function is to suboptimal policies. In particular, for a fixed discount factor, the policy will stabilize the system if $\delta < \frac{1}{1-\gamma} - C_\gamma$. Thus smaller values of the $C_\gamma$ constants permit more suboptimal policies. 
\end{remark}
\vspace{-.5em}
\section{Lyapunov Design for Infinite Horizon Reinforcement Learning}\label{sec:formulation}

Our method uses a positive definite candidate Control Lyapunov Function $W \colon \R^n \to \R$ for the nonlinear dynamics \eqref{eq:dynamics}, and reshapes \eqref{eq:cost1} to our proposed new long horizon cost $\tilde{V}_\gamma^\pi \colon \X \to \R \cup \{\infty\}$:\vspace{-0.1cm}
\begin{align}\label{eq:cost2}
 \tilde{V}_\gamma^\pol(x_0) =& \sum_{k=0}^{\infty}\gamma^{k}\bigg( [W\big(F(x_k,\pol(x_k))\big) - W(x_k)]+ \ell(x_k,\pol(x_k)) \bigg)\\ \nonumber
    & \text{s.t. }  x_{k+1} = F(x_k,\pol(x_k)).
\end{align}
As we shall see below, our method works best when $W$ is in fact a CLF for the system, but still provides benefits when it is only an `approximate' CLF for the system (in a sense we will make precise later). For each $\gamma \in \sp{0,1}$ the new optimal value function is given by:
\begin{equation}
    \tilde{V}_\gamma^{*}(x) = \inf_{\pi \in \Pi}  \tilde{V}_\gamma^{\pi}(x).
\end{equation}

The new cost \eqref{eq:cost2} includes the amount that $W$ changes at each time step, and thus encourages choices of inputs which decrease $W$ over time. In this case, the Bellman equation \cite{bertsekas1996neuro} dictates: 
\begin{equation}\label{eq:best_bellman}
    \tilde{V}_
    \gamma^*(x) = \inf_{u \in \U}\big[ \gamma \tilde{V}_\gamma^*(F(x,u)) +\Delta W(x,u) + \ell(x,u) \big], \ \ \forall x \in \X,
\end{equation}
where $\Delta W(x,u):= W(F(x,u)) -W(x)$. To gain some intuition for the approach let us consider the two extremes where $\gamma = 0$ and $\gamma = 1$. In the case where $\gamma =1$, by inspection we see that $\tilde{V}_1^* = V_1^* - W$ solves the Bellman equation. Plugging in this solution demonstrates that any optimal policy $\tilde{\pi}_1^*$ must satisfy $\tilde{\pi}_1^*(x) \in \arg \min_{u \in \U}[V_1^*(F(x,u)) + \ell(x,u)]$. This is precisely the optimality condition for the original cost \eqref{eq:cost1} when $\gamma =1$, and thus the set of optimal policies for the two problems coincide. Thus, in this case, by embedding the CLF in the cost we are effectively using $W$ as a warm-start initial guess for the optimal value function. In the other extreme where $\gamma = 0$, from \eqref{eq:best_bellman} we see that an optimal policy must satisfy $\tilde{\pi}_{0}^*(x) \in \arg \min_{u \in \U}\big[\Delta W(x,u) + \ell(x,u)\big]$.
Thus, when $\gamma =0$ the optimal policy attempts to greedily decrease the value of the candidate CLF and the one-step cost on the input. As we shall see below, when intermediate discount factors are used, optimal policies may instead decrease the value of $W$ over the course of several steps.

%\begin{remark} (Reward Shaping Objective) Classic reward shaping approaches \cite{ng1999policy} typically aim to reshape the objective in a way that makes the problem easier to solve but preserves the original `reward ranking'. In other words, the goal of these methods is to formulate a new problem which explicitly makes its easier to optimize the original objective. In contrast, our primary goal is not to optimize the original reward, but instead to promote policies which stabilize the system. Nonetheless, as our experiments indicate, by enabling agents to quickly find a stabilizing controller, our approach can still help base RL algorithms rapidly achieve a high reward on the original objective \eqref{eq:cost1}.
%\end{remark}

Using the new cost function \eqref{eq:cost2}, each policy must satisfy the new difference equation:
\begin{equation}
    \tilde{V}_\gamma^\pi(x) = \gamma \tilde{V}_\gamma^\pi\big(F(x,\pi(x))\big) + W\big(F(x,\pi(x))\big) - W(x) + \ell(x,\pi(x)).
\end{equation}
In our stability analysis, we will use the following composite function as a candidate CLF for \eqref{eq:dynamics}: 
\begin{equation}\label{eq:new_CLF}
    \tilde{\bm{\mathcal{V}}}_\gamma^\pi(x) = W(x) + \gamma \tilde{V}_\gamma^\pi(x).
\end{equation}
We provide an interpretation of this curious candidate CLF in Remark \ref{rmk:augment} below, but first perform an initial analysis similar to the one presented in the previous section. Defining for each $\pol \in \hat{\Pol}$, $\gamma \in \sp{0,1}$ and $x \in \X$ the new optimality gap:
\begin{equation}
    \tilde{\epsilon}_\gamma^\pi(x) = \tilde{V}_\gamma^*(x) -\tilde{V}_\gamma^\pi(x),
\end{equation}
and following steps analogous to those taken in \eqref{eq:algebra1}-\eqref{eq:decay1}, we can obtain the following: 
\begin{align}
   \tilde{\bm{\mathcal{V}}}_\gamma^\pi\big(F(x,\pi(x))\big) -\tilde{\bm{\mathcal{V}}}_\gamma^\pi(x) &= -\ell(x,\pi(x)) + (1-\gamma)\tilde{V}_\gamma^{\pi}(x)\\
   &= -\ell(x,\pi(x)) + (1-\gamma)\big[\tilde{V}_\gamma^*(x)+ \tilde{\epsilon}_\gamma^\pi(x) \big]\\
   &\leq -Q(x) + (1-\gamma)\big[\tilde{V}_\gamma^*(x)+\tilde{\epsilon}_\gamma^\pi(x)].\label{eq:decay2}
\end{align}
Similar to the analysis in the previous section, we will aim to understand when the right-hand side of \eqref{eq:decay2} is negative, as this will characterize when $\pol$ stabilizes the system. One key difference between the inequalities \eqref{eq:decay1} and \eqref{eq:decay2} is that, while the original value function $V_\gamma^*$ is necessarily positive definite,  $\tilde{V}_\gamma^*$ can actually take on negative values since the addition of the CLF term allows the new running cost in \eqref{eq:cost2} to be negative. As we shall see, this forms the basis for the stability and robustness properties our cost formulation enjoys when $W$ is designed properly.

\begin{remark}\label{rmk:augment} \textit{(Learning Corrections to W)} When the right hand side of \eqref{eq:decay2} is negative for each $x \in \X \setminus \{0\}$, inequality \eqref{eq:decay2} demonstrates that $\tilde{\bm{\mathcal{V}}}_\gamma^\pi$ is in fact a CLF for \eqref{eq:dynamics} and that $\pol$ stabilizes the system (see Theorem \ref{thm:stability}). We can think of $W$ as an `initial guess' for a CLF for the system, while $\gamma \tilde{V}_\gamma^\pol$ is a `correction' to $W$ that is implicitly made by a learned policy $\pol$. Roughly speaking, the larger the discount factor, the larger this correction. Thus, the user can trade-off how much the learned policy is able to correct the candidate CLF $W$ against the additional complexity of solving a problem with a higher discount factor, depending on how `good' they believe the CLF candidate to be. 
\end{remark}

We first state a general stability result for suboptimal policies associated to the new cost, and then discuss how the choice of $W$ affects the stability of suboptimal control policies: 
\begin{assumption}\label{asm:value_bound2}
For each $\gamma \in \sp{0,1}$ there exists $\tilde{C}_\gamma\! \in\! \R$ such that $\tilde{V}_\gamma^*(x) \leq \tilde{C}_\gamma Q(x)$ for each $x \in \X$. 
\end{assumption}
Because the reshaped one-step cost $W(F(x,u)) -W(x) + \ell(x,u)$ can take on negative values, so can the $\tilde{C}_\gamma$ constants. Moreover, in this case it is possibe to have $\tilde{C}_{\gamma '} \geq \tilde{C}_{\gamma''}$ when $\gamma ' \leq \gamma ''$. This is because when larger discount factors are used, the optimal policy can benefit from decreasing $W$ further into the future. The following stability result is analogous to Proposition \ref{prop1}:
\begin{theorem}\label{thm:stability}
Let Assumption \ref{asm:value_bound2} hold and let $\gamma \in \sp{0,1}$ and $\pi \in \hat{\Pi}$ be fixed. Further assume that there exists $\tilde{\delta} >0$ such that for each $x \in \X$ we have $i)$ $\tilde{\epsilon}_{\gamma}^\pi(x) \leq \delta Q(x)$ and $ii)$ $\tilde{C}_\gamma+\tilde{\delta}<\frac{1}{1-\gamma}$. Then, $\pi$ asymptotically stabilizes \eqref{eq:dynamics}. 
\end{theorem}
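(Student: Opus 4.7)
The plan is to mirror the proof of Proposition 1 almost verbatim, with the one substantive extra check being that the candidate Lyapunov function $\tilde{\bm{\mathcal{V}}}_\gamma^\pi$ defined in \eqref{eq:new_CLF} is indeed positive definite (unlike in the original setting, this is not immediate because the reshaped running cost can be negative). The heavy algebraic lifting has already been done in equation \eqref{eq:decay2}, so the remaining work is to bound the right-hand side using the two hypotheses and then appeal to a standard discrete-time Lyapunov argument from Appendix \ref{sec:stability}.

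Concretely, I would begin from the inequality
\begin{equation*}
\tilde{\bm{\mathcal{V}}}_\gamma^\pi\bigl(F(x,\pi(x))\bigr) -\tilde{\bm{\mathcal{V}}}_\gamma^\pi(x) \leq -Q(x) + (1-\gamma)\bigl[\tilde{V}_\gamma^*(x) + \tilde{\epsilon}_\gamma^\pi(x)\bigr]
\end{equation*}
established just before the theorem statement. Using Assumption \ref{asm:value_bound2} to substitute $\tilde{V}_\gamma^*(x)\leq \tilde{C}_\gamma Q(x)$ and condition $(i)$ to substitute $\tilde{\epsilon}_\gamma^\pi(x)\leq \tilde{\delta} Q(x)$ yields
\begin{equation*}
\tilde{\bm{\mathcal{V}}}_\gamma^\pi\bigl(F(x,\pi(x))\bigr) -\tilde{\bm{\mathcal{V}}}_\gamma^\pi(x) \leq \Bigl[-1+(1-\gamma)(\tilde{C}_\gamma+\tilde{\delta})\Bigr] Q(x),
\end{equation*}
and condition $(ii)$ makes the bracketed coefficient strictly negative. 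Since $Q$ is positive definite, this means $\tilde{\bm{\mathcal{V}}}_\gamma^\pi$ strictly decreases along trajectories away from the origin, at a rate proportional to $Q(x)$.

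The step I expect to require the most care is verifying that $\tilde{\bm{\mathcal{V}}}_\gamma^\pi$ is genuinely positive definite, so that the decrease property above actually implies asymptotic stability rather than merely boundedness. Here I would use the telescoping identity one gets by expanding the definition of $\tilde{V}_\gamma^\pi$ in \eqref{eq:cost2}: shifting the index on the $W(F(x_k,\pi(x_k)))$ terms shows that
\begin{equation*}
\tilde{\bm{\mathcal{V}}}_\gamma^\pi(x) = W(x)+\gamma \tilde{V}_\gamma^\pi(x) = (1-\gamma)\sum_{k=0}^\infty \gamma^k W(x_k) + \gamma V_\gamma^\pi(x),
\end{equation*}
which is a sum of non-negative quantities bounded below by $(1-\gamma)W(x)$ for $\gamma\in[0,1)$ and by $V_1^\pi(x)\geq Q(x)$ at $\gamma=1$; either way $\tilde{\bm{\mathcal{V}}}_\gamma^\pi$ is positive definite.

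With both a positive definite Lyapunov candidate and a strict decrease along trajectories proportional to a positive definite function of the state, the conclusion follows from the standard discrete-time Lyapunov theorem referenced in Remark \ref{rmk:value} and Appendix \ref{sec:stability} (the same tool invoked at the end of the proof of Proposition \ref{prop1}). I would close by noting, in parallel with Remark \ref{rmk:stability}, that the permissible suboptimality $\tilde{\delta}$ is now governed by $\tfrac{1}{1-\gamma}-\tilde{C}_\gamma$ rather than $\tfrac{1}{1-\gamma}-C_\gamma$, which is precisely where the benefit of shaping by $W$ will enter in subsequent comparisons.
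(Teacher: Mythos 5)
Your proposal is correct and takes essentially the same route as the paper: the paper's proof likewise substitutes Assumption \ref{asm:value_bound2} and hypothesis $(i)$ into \eqref{eq:decay2} to obtain the decrease bound $\bigl[-1+(1-\gamma)(\tilde{C}_\gamma+\tilde{\delta})\bigr]Q(x)$, made negative definite by $(ii)$. The positive definiteness of $\tilde{\bm{\mathcal{V}}}_\gamma^\pi$, which you rightly flag as the only delicate point, is handled in the paper by Lemma \ref{lemma:lyap_bound} using the same telescoping rearrangement of \eqref{eq:cost2} that you propose.
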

The proof is conceptually similar to the proof of Proposition 1; we delegate the proof to Appendix \ref{sec:proofs} for brevity. Indeed, note that the conditions for stability under the new cost are essentially identical to those for the previous cost in Proposition \ref{prop1}. 

As alluded to in Remark \ref{rmk:stability}, we will primarily focus on comparing how large the constants $C_\gamma \geq 1$ and $\tilde{C}_\gamma \in \R$ are for the two problems, as they control the discount factor required to learn a stabilizing policy and also the `robustness' of the cost to suboptimal controllers. We provide two characterizations which ensure that $\tilde{C}_\gamma<C_
\gamma$. The first condition is taken from the model-predictive control literature \cite{jadbabaie1999receding,grimm2005model}, where CLFs are used as terminal costs for finite-horizon prediction problems. Proof of the following result can be found in Appendix \ref{sec:proofs}:
\begin{lemma}\label{lemma:decrease}
Suppose that for each $x \in \X$ the following condition holds:
\begin{equation}
\inf_{u \in U} W(F(x,u)) -W(x) + \ell(x,u)\leq 0.
\end{equation}
Then Assumption \ref{asm:value_bound2} is satisfied with constant $\tilde{C}_\gamma  \leq 0$.
\end{lemma}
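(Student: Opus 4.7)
The plan is to exhibit a (nearly) feasible policy whose associated reshaped cost \eqref{eq:cost2} is pointwise non-positive on $\X$; by definition of the infimum this would immediately give $\tilde{V}_\gamma^*(x) \leq 0 = 0 \cdot Q(x)$ for every $x \in \X$, and hence verify Assumption \ref{asm:value_bound2} with $\tilde{C}_\gamma = 0 \leq 0$.

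To produce such a policy I would turn the one-step hypothesis into an $\eta$-approximate greedy selector. Since the infimum $\inf_{u \in \U}[W(F(x,u)) - W(x) + \ell(x,u)]$ is bounded above by $0$ at every $x$ but need not be attained, fix an arbitrary slack $\eta > 0$ and pick, at each $x \in \X$, an input $\hat{\pi}_\eta(x) \in \U$ satisfying
\[
W\bigl(F(x, \hat{\pi}_\eta(x))\bigr) - W(x) + \ell(x, \hat{\pi}_\eta(x)) \leq \eta.
\]
Rolling $\hat{\pi}_\eta$ out from any $x_0$ and substituting into \eqref{eq:cost2}, the reshaped per-step cost is $\leq \eta$ at every step, so the geometric sum gives $\tilde{V}_\gamma^{\hat{\pi}_\eta}(x_0) \leq \eta/(1-\gamma)$ for each $\gamma \in [0,1)$. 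Since $\tilde{V}_\gamma^*(x_0) \leq \tilde{V}_\gamma^{\hat{\pi}_\eta}(x_0)$ by the definition of $\tilde{V}_\gamma^*$, letting $\eta \downarrow 0$ yields $\tilde{V}_\gamma^*(x_0) \leq 0$ uniformly in $x_0$, which is the bound we need.

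The only genuine obstacle is the selection step: because the infimum in the hypothesis may not be attained one cannot in general choose a stationary policy realizing exact one-step decrease, which is precisely why I introduce the slack $\eta$ and send it to zero at the end. A secondary technical wrinkle is that $\tilde{V}_\gamma^{\hat{\pi}_\eta}(x_0)$ could in principle equal $-\infty$ if $W$ is driven down arbitrarily rapidly along the rollout, but this is harmless here since it only strengthens the desired inequality. Conceptually, the lemma is simply recording that the CLF-style one-step decrease condition in its hypothesis says exactly that the reshaped running cost is non-positive under a greedy choice, so the reshaped optimal value is non-positive everywhere.
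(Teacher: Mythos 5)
Your argument is correct and is essentially the paper's proof: both exhibit a (near-)greedy policy whose reshaped per-step cost is controlled by the hypothesis, roll it out, and bound $\tilde{V}_\gamma^*$ by the resulting geometric sum. Your $\eta$-approximate selector is in fact a cleaner treatment of the case where the infimum is not attained (the paper simply writes $\bar{\pi}(x) \in \arg\inf$, implicitly assuming attainment); the only trade-off is that the bound $\eta/(1-\gamma)$ degenerates at $\gamma = 1$, an edge case the exact-argmin version handles since each per-step term is then $\leq 0$.
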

The hypothesis of Lemma \ref{lemma:decrease} implies that $i)$ $W$ is a true CLF for the system and $ii)$ $W$ dominates the running cost $\ell$, in the sense that $W$ can be decreased more rapidly than $\ell$ accumulates. Effectively, this condition implies that it is advantageous for polices to myopically decrease $W$ at each time step. Consequently, when this condition holds optimal polcies associated to the reshaped costs \eqref{eq:cost2} will stabilize the system for any choice of discount factor. 

The following definition generalizes this condition to cases where $W$ may not be a true CLF for the system but can be decreased over several time-steps: 

\begin{definition}\label{def1}
We say that the candidate CLF $W$ $\bar{\gamma}$-dominates the running cost $\ell$ if for each discount factor $ \bar{\gamma} \leq \gamma \leq 1$ and $x \in \X$ we have $\tilde{V}_\gamma^{*}(x) \leq V_\gamma^{*}(x)$.
\end{definition}

The condition in \eqref{def1} effectively provides a way of characterizing how `close' $W$ is to being a true CLF for the real-world system. In particular, the larger $\bar{\gamma}$ the further into the future RL algorithms must look to see the benefits of decreasing $W$. Our previous discussion, which showed that $\tilde{V}_1^* = V_1^* -W$, demonstrates that every candidate CLF $1$-dominates the cost. Moreover, clearly $W$ can only $0$-dominate the original cost if it is a CLF for the system.
% satisfies the assumption in Lemma \ref{lemma:decrease}.
While this condition is more difficult to verify for intermediate values of $\bar{\gamma}$, it provides qualitative insight into how even approximate CLFs for the system can still make it easier to obtain stabilizing controllers.

\begin{remark} \label{remark:robustness}(Robustness of reshaped cost)  When the condition of Lemma \ref{lemma:decrease} is satisfied we will have $\tilde{C}_\gamma \leq 0 < C_\gamma$, implying the new cost enjoys the desirable robustness properties discussed above. When $W$ satisfies the `approximate CLF' condition in Definition  \eqref{def1}, it will only enjoy these benefits when the discount factor is large enough. We leave it as a matter for future work to provide quantitative estimates for the $\tilde{C}_\gamma$ constants in these regimes, and to provide sufficient conditions which ensure $W$ $\bar{\gamma}$-dominates the running cost.
\end{remark}
\vspace{-0.5em}
\iffalse
\begin{figure}
\begin{center}
\includegraphics[width=0.9\textwidth,trim={0 1.8cm 0 0},clip]{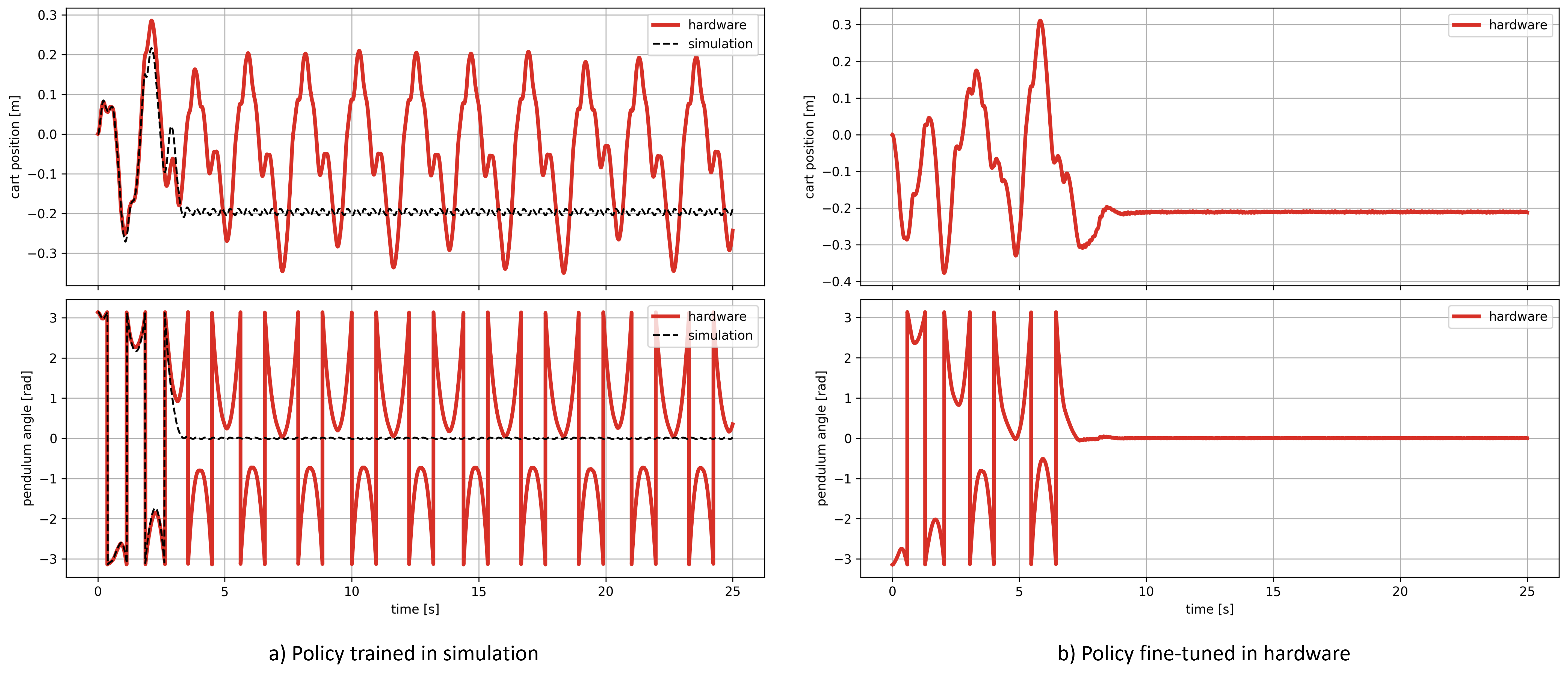}
\end{center}
% \vspace{-1em}
\caption{\small Experimental plots of the cart position and pendulum angle of the cartpole system. (left) The policy trained only in simulation fails to bring the real cartpole system to the upright position; (right) by fine-tuning the learned policy with $20 s$ of real-world data using our CLF-based reward function, we obtain a successful policy.}
\label{fig:cartpole-results}
\vspace{-1.2em}
\end{figure}
\fi
% \vspace{-0.5em}
\section{Examples and Practical Implementations}\label{sec:examples}
We summarize the main results for each of our examples, but leave most details and plots to Appendix \ref{sec:experiment_details}. In every experiment we report, the soft actor-critic algorithm (SAC) \cite{levine_sac} is used as the learning algorithm to optimize the various reward structures we investigate. 

\noindent

\noindent
{\bf Velocity Tracking for A1 Quadruped:} We apply our approach to train a neural network controller which augments and improves a nominal model-based controller \cite{da2021learning} for a quadruped robot using real-world data.  As illustrated by the pink curve in Fig. \ref{fig:a1_experiment} (left), the nominal controller fails to accurately track desired velocities specified by the user. We design a CLF around the desired gait using a linearized reduced-order model for the system. We then collect rollouts of $10 s$ on the robot hardware with randomly chosen desired velocity profiles, and solve an  RL problem using our cost and a discount factor $\gamma=0$. Our approach is able to learn a policy which significantly improves the tracking performance of the nominal controller within 5 minutes (30 episodes) of hardware data, as shown in Fig. \ref{fig:a1_experiment} (left). A video of these results can be found in \url{https://youtu.be/l7kBfitE5n8}, and more details are provided in Appendix \ref{sec:experiment_details}. Furthermore, in Fig. \ref{fig:a1_experiment} (right) we benchmark our approach in simulation against an RL agent trained with a `standard' cost which penalizes the squared error with respect to the desired velocity. As this figure demonstrates, our method is able to rapidly decrease the average tracking error in only around $2$ thousand steps from the environment. In contrast, the benchmark approach is only  able to reach this level of performance for the first time after around $24$ thousand steps. 

\begin{figure}
     \centering
     \begin{subfigure}{0.48\textwidth}
         \centering
         \includegraphics[width=\textwidth]{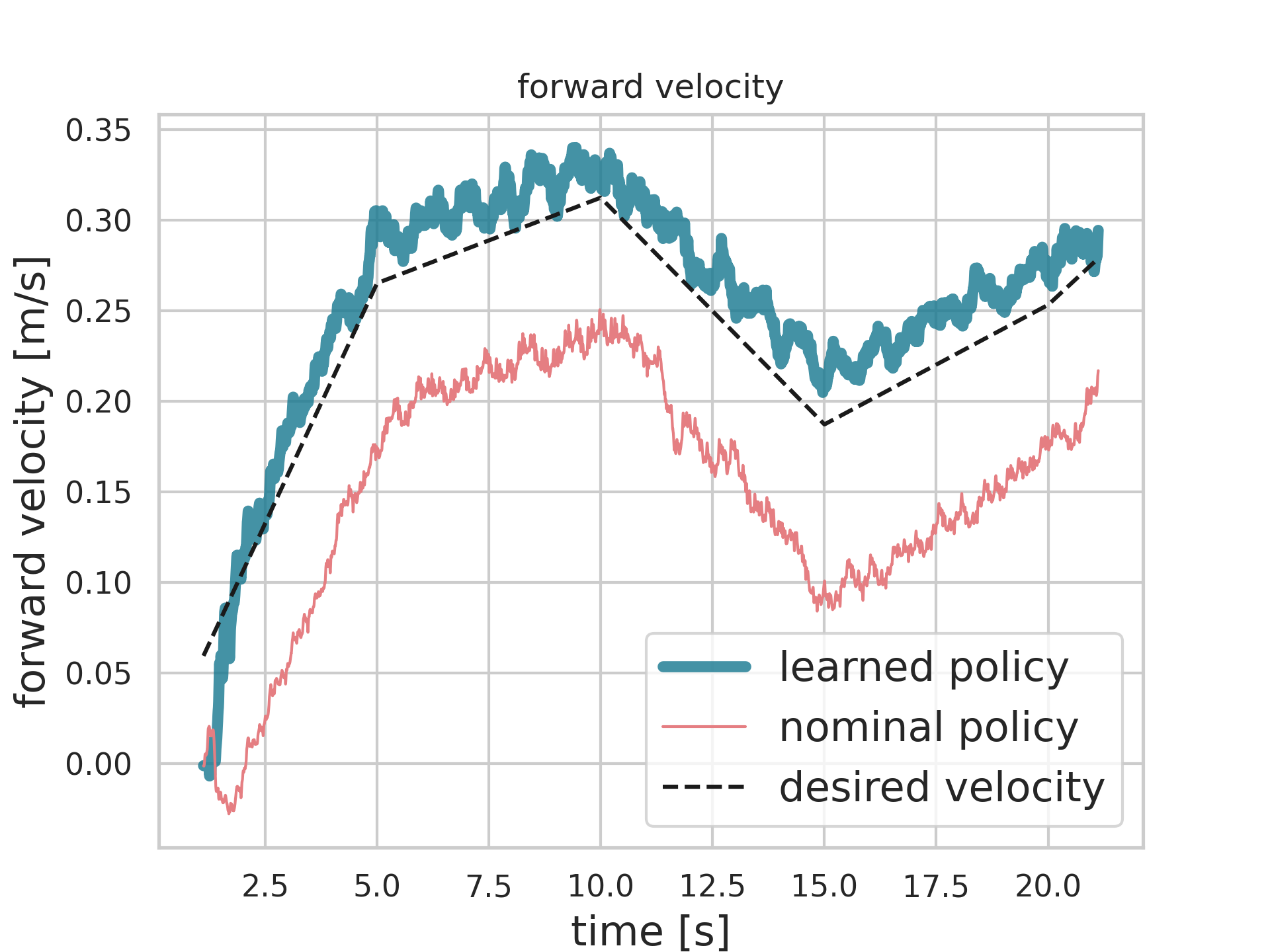}
     \end{subfigure}
     \hfill
     \begin{subfigure}{0.48\textwidth}
         \centering
    \includegraphics[width=\textwidth]{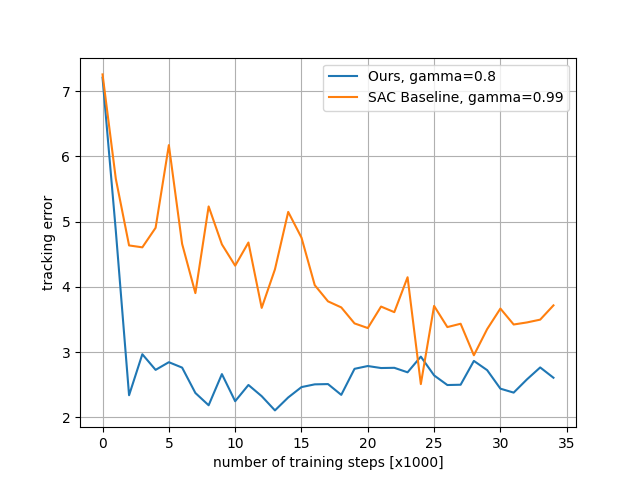}
     \end{subfigure}
        \caption{\small (Left) Plot illustrating improved velocity tracking of the learned policy (in dark green) compared to the nominal locomotion controller (in pink) to track a desired velocity profile (in dashed black line) using our proposed method on the \texttt{Unitree A1} robot hardware. (Right) Plot from the simulated benchmark study illustrating cumulative velocity tracking error (lower is better) over 10s rollouts at different stages of the training. In orange, we show the results of fine-tuning using SAC with a standard RL cost. In blue, we fine-tune using SAC with our reward reshaping method, with a candidate CLF designed on a nominal linearized model of the robot. In both cases, we plot the results using the discount factor that achieved the best performance. }
        \label{fig:a1_experiment}
        \vspace{-1.5em}
\end{figure}

\noindent
\textbf{A1 Quadruped Walking with an Unknown Load:} We attach an un-modeled load to the A1 quadruped, that is equivalent to one-third the mass of the robot. Fine-tuning on hardware the same base controller from the previous set-up where the CLF is designed to stabilize to the target gait, our approach is able to significantly decrease the tracking error to about one-third its nominal value with only one minute of data collected on the robot hardware as illustrated in Fig. \ref{fig:a1-payload-results} in Appendix \ref{sec:experiment_details}. Additionally, in Appendix \ref{sec:experiment_details}, we run a simulated benchmark comparison and verify that our method clearly out-performs the `standard' cost baseline for this task.

%In particular, we reproduce the hardware experiment in simulation by adding a 10lb unknown weight to the robot. When testing our method, we again use SAC with the same reward formulation from the hardware experiments above. For the baseline reward, we penalize the distance to the target that we want to track. Figure \ref{fig:a1_comparison_2} depicts the best results that we have been able to obtain for each cost formulation across different discount factors and training hyper-parameters. As Figure \ref{fig:a1_comparison_2} depicts, our approach quickly converges to a stable walking controller which closely tracks the references after only $\sim 22$ thousand steps of the environment. The baseline does not match this performance until it has had access to $\sim 48$ thousand steps, and takes much longer to consistently approach the performance of our method.

\noindent
\textbf{Fine-tuning a Learned Policy for Cartpole Swing-Up:} We fine-tune a swing-up controller for the \texttt{Quanser} cartpole system \cite{quanser_products_2021} using real-world data and an initial policy which was pre-trained in simulation but that does not translate well to the real system. 
Due to the underactuated nature of the system, synthesizing a CLF by hand is challenging. Thus, as alluded to previously, we use a `typical' cost function of the form \eqref{eq:cost1} and a discount factor of $\gamma =0.999$ to learn a stabilizing neural network policy $\pi_\phi$ for a simulation model of the system. Given the discussion in Remark \ref{rmk:value}, we use the value function $V_\theta$ associated with the simulation-based policy as the candidate CLF ($W = V_\theta$) for our reward reshaping formulation \eqref{eq:cost2}. When improving the simulation-based policy $\pi_\phi$ with real-world data, we keep the parameters of this network fixed and learn an additional smaller policy $\pi_{\psi}$ (so that the overall control action is produced by $\pi_\phi + \pi_\psi$) using our proposed CLF-based cost formulation. We solve the reshaped problem with a discount factor $\gamma = 0$ and collect rollouts of $10 s$ on hardware. Our CLF-based fine-tuning approach is able to successfully complete the swing-up task after collecting data from just one rollout. After collecting data from an additional rollout, the controller is reliable and robust enough to recover from several pushes. A video of these experiments can be found in \url{https://youtu.be/l7kBfitE5n8}, and more details and plots of the results are provided in Appendix \ref{sec:experiment_details}. Furthermore, in Appendix \ref{sec:experiment_details} we provide a simulation study comparing a standard fine-tuning approach to our method, showing that our approach is able to more rapidly learn a reliable swing-up policy than the baseline and also achieves a higher reward.

\noindent
\textbf{Fine-tuning a Bipedal Walking Controller in Simulation:} We also apply our design methodology to fine-tune a model-based walking controller \cite{ames2014rapidly} for a bipedal robot with large amounts of dynamics uncertainty. Model uncertainty is introduced by doubling the mass of each link of the robot. The nominal controller fails to stabilize the gait and falls within a few steps. To apply our method, we design a CLF around the target gait as in \cite{ames2014rapidly} to be used in our reward formulation. As a benchmark comparison, we also train policies with a reward which penalizes the distance to the target motion (no CLF term), as is most commonly done in RL approaches for bipedal locomotion which use target gaits in the reward \cite{li2021reinforcement}. Our approach is able to significantly reduce the average tracking error per episode after only 40000 steps of the environment (corresponding to 40 seconds of data), while the baseline does not reach a similar level of performance even after 1.2 million steps, as illustrated in Fig. \ref{fig:rabbit-results} of Appendix \ref{sec:experiment_details}.
% We provide further details in Appendix \ref{sec:experiment_details}.

\noindent
\textbf{Inverted Pendulum with Input Constraints:} Our final example demonstrates the utility of our method even when $W$ is a crude guess for a CLF for the system, through the use of moderate discount factors. We illustrate this for a simple inverted pendulum simulator by varying the magnitude of the input constraints for the system. We use the procedure from \cite{ames2014rapidly} to design a candidate CLF for the system. Like many CLF design techniques, this approach assumes there are no input constraints and encourages the pendulum to swing directly up. As the input constraints are tightened, $W$ becomes a poorer candidate CLF, as there is not enough actuation authority to decrease $W$ at each time step. Even in this case, in line with the discussion of Remark \ref{remark:robustness}, if a proper discount factor is used, the addition of the candidate CLF in the reward enables our method to rapidly learn a stabilizing controller for each setting of the input bound. These results are presented in Appendix \ref{sec:experiment_details}.

\vspace{-0.3em}
\section{Discussion and Limitations}\label{sec:limitations}
As we have mentioned previously, our approach has several limitations. The cost-shaping technique we introduce in Section \ref{sec:formulation} only provides benefits when $W$ is in-fact a reasonable guess for a CLF for the true system. This requires that the user has a dynamics model which captures the primary features of the environments which affect the structure of CLFs for the system. While the cart-pole simulations we provide in the Appendix \ref{sec:experiment_details} provide some intuition for when this will be the case, further research is needed to better understand in what scenarios we can see significant benefits from our method. Nonetheless, our two hardware experiments provide encouraging initial results which indicate that our method can rapidly learn stabilizing controllers using CLFs which are constructed using a nominal dynamics model. More broadly, there are many exciting avenues for further incorporating Lyapunov design techniques with RL, especially offline learning \cite{levine2020offline}.

\newpage
\bibliography{references.bib}

\newpage
\appendix
\section{Additional Literature Review}\label{sec:extra_lit}

\textbf{Model Predictive Control:}
We briefly review stability results from the model predictive control (MPC) literature, focusing our discussion on the benefits of using a CLF as the terminal cost. In their simplest form, MPC control schemes minimize a cost functional of the form
\begin{align*}
\inf_{\hat{\textbf{u}} \in \U^N} J_{\textit{MPC}}^N(x_k, \hat{\textbf{u}}) &= \sum_{k=0}^{N-1} \big(Q(\hat{x}_k) +R(\hat{u}_k)\big) + \hat{W}(\hat{x}_N)\\
& \text{s.t. } \hat{x}_{k+1} = F(\hat{x}_k, \hat{u}_k), \ \ \hat{x}_{0} = x_k,
\end{align*}
where $x_k$ is the the current state of the real-world system, $N \in \N$ is the prediction horizon, $\{\hat{x}_k\}_{k=0}^{N}$ and $\hat{\bm{u}} = \{\hat{u}_k\}_{k=0}^{N-1} \in \U^N$ are a predictive state trajectory and control sequence, $Q$ and $R$ are as above, and $\hat{W} \colon \R^n \to \R_{\geq 0}$ is the terminal cost which is assumed to be a proper function. The MPC controller then applies the first step of the resulting open loop control and the process repeats, implicitly defining a control law $u_{\textit{MPC}}(x)$. The MPC cost $J^N_{\textit{MPC}}(x_k,\cdot)$ can be thought of as a finite-horizon approximation of the original cost \eqref{eq:cost1} (except that it is defined over an open-loop sequence of control inputs instead of being a cost over policies). 

Stability results from the MPC literature focus primarily on the effects of the prediction horizon $N$ and the choice of terminal cost $\hat{W}$. Under mild conditions, for any choice of terminal cost (including $\hat{W}(\cdot) \equiv 0$), the user can guarantee that the MPC scheme stabilizes the system on any desired operating region by making the prediction horizon $N$ sufficiently large \cite{jadbabaie2005stability,grimm2005model}. Thus, there is a clear connection between the explicit prediction horizon $N$ in MPC schemes and the discount factor $\gamma$, as both need to be sufficiently large if a stabilizing controller is to be obtained (since trajectory optimization problems with longer time horizons are generally more difficult to solve). Indeed, in \cite{postoyan2016stability} it was pointed out that the \emph{implicit prediction horizon} $\frac{1}{1-\gamma}$, a factor which shows up in the stability conditions in Proposition \ref{prop1}, plays essentially the same role in stability analysis as $N$ for an MPC scheme with no terminal cost when the running cost is $\ell = Q +R$. Thus, much like the `typical' policy optimization problems discussed in Section \ref{sec:og_cost}, MPC schemes with no terminal cost (or one which is chosen poorly) may require an excessively long prediction horizon to stabilize the system. 

Fortunately, the MPC literature has a well-established technique for reducing the prediction horizon needed to stabilize the system: use an (approximate) CLF for the terminal cost $\hat{W}$ \cite{jadbabaie2001unconstrained,jadbabaie2005stability,grimm2005model}. Indeed, roughly speaking, these results guarantee that for \emph{any prediction horizon} $N \in \N$ the MPC scheme will be stabilizing if $\hat{W}$ is a valid CLF for the system. Extensive empirical evidence \cite{jadbabaie1999receding} and formal analysis \cite{jadbabaie2001unconstrained} has demonstrated that well-designed CLF terminal costs reduce the prediction horizon needed to stabilize the system on a desired set and increase the robustness of the overall MPC control scheme \cite{grimm2004examples}. Thus, in many ways our cost-reshaping approach can be seen as a way to obtain these benefits in the context of infinite horizon model-free reinforcement learning.

\section{Asymptotic Stability and Lyapunov Theory}\label{sec:stability}

\subsection{Asymptotic Stability and Lyapunov Theory}

Next, we briefly introduce the elements from stability theory and Lyapunov theory which we use extensively throughout the paper.

\subsection{Notation and Terminology} We say that a function $W \colon \R^n \to \R$ is \emph{positive definite} if $W(0) =0 $ and $W(x)>0$ if $x \neq 0$.
Let $\alpha \colon \left[0,\infty \right) \to [0,\infty)$ be a continuous function. We say that $\alpha$ is in class $\K$ (denoted $\alpha \in \K$) if $\alpha(0) =0$ and $\alpha$ is strictly increasing. If in addition we have $\alpha(r)\to \infty$ as $r \to \infty$ when we say that $\alpha$ is in class $\K_\infty$ (denoted $\alpha \in \K_\infty$). Let $\beta \colon \left[0,\infty\right) \times \left[0,\infty\right)$ be a continuous function. We say that $\beta$ is in class $\KL$ if for each fixed $t \in \left[0,\infty\right)$ the function $\beta(\cdot,t)$ is in class $\K$ and for each fixed $r \in \left[0,\infty \right)$ we have $\beta(r,t) \to 0$ as $t \to \infty$. 

\subsection{Basic Stability Results}

\begin{definition} We say that the closed loop system $x_{k+1} = F(x_k, \pi(x_k))$ is \emph{asymptotically stable on the set $D \subset \R^n$} if there exists $\beta \in \KL$ such that for each initial condition $x_0 \in D$ and $k \in \N$ the closed-loop trajectory satisfies:
\begin{equation}
\|x_k\|_2 \leq  \beta(\|x_0\|_2,k).
\end{equation}
Analogously, if the preceding condition holds then we say that $\pol$ \emph{asymptotically stabilizes} \eqref{eq:dynamics}.
\end{definition}
In words, the definition says that $\pol$ asymptotically stabilizes \eqref{eq:dynamics} if all trajectories of the closed-loop system $x_{k+1} = F(x_k,\pol(x_k))$ converge to the origin. Asymptotic stability is a difficult property to verify directly as it requires reasoning about the infinite-horizon behavior of trajectories. Lyapunov functions are a powerful analysis tool which can verify asymptotic stability with a `one-step' criterion: 

\begin{definition}
 We say that the positive definite function $W \colon \R^n \to \R$ is a Lyapunov function for the closed-loop system $x_{k+1} = F(x_k,\pol(x_k))$ if for each $x \in \R^n$ we have:
 \begin{equation}\label{eq:lyap_def}
     W(F(x,\pol(x))) - W(x) < 0.
 \end{equation}
\end{definition}
Intuitively, the Lyapunov function $W$ can be thought of as an energy-like function for the closed loop system $x_{k+1} = F(x_k,\pol(x_k))$. In this light, the condition \eqref{eq:lyap_def} ensures that the 'energy' of the closed-loop system is decreasing at each point in the state-space. This condition guarantees that the closed-loop system is asymptotically stable \cite{sastry2013nonlinear}, and is a simple algebraic condition. Note that while control Lyapunov functions are defined formally for the open-loop dynamics \eqref{eq:dynamics}, a Lyapunov function is defined for a particular set of closed-loop dynamics. That is, a control Lyapunov function $W$ for $x_{k+1} = F(x_k,u_k)$ becomes a Lyapunov function for the closed-loop dynamics $x_{k+1} = F(x_k,\pi(x_k))$ after we apply a control law $\pi$ which satisfies $W(F(x,\pi(x))) - W(x) <0$ for each $x \in \X$.

\section{Missing Proofs and Intermediate Results}\label{sec:proofs}

\begin{lemma}\label{lemma:lyap_bound}
The composite function $\tilde{\bm{\mathcal{V}}}_\gamma^\pi = W +\gamma  \tilde{V}_\gamma^\pi \colon \X \to \R \cup \{\infty\}$ is positive definite.
\end{lemma}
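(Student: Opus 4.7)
The plan is to establish the algebraic identity
\begin{equation*}
\tilde{\bm{\mathcal{V}}}_\gamma^\pi(x_0) = (1-\gamma)\sum_{k=0}^{\infty} \gamma^k W(x_k) + \gamma V_\gamma^\pi(x_0),
\end{equation*}
where $\{x_k\}$ is the closed-loop trajectory generated by $\pi$ from $x_0$. This expression writes $\tilde{\bm{\mathcal{V}}}_\gamma^\pi$ as a sum of two manifestly non-negative quantities, from which positive definiteness reads off immediately.

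To derive the identity I would substitute the definition of $\tilde V_\gamma^\pi$ from \eqref{eq:cost2} into $\tilde{\bm{\mathcal{V}}}_\gamma^\pi = W + \gamma \tilde V_\gamma^\pi$, separate the telescoping $W$-difference terms from the running-cost terms, and perform an index shift on the former. Working with partial sums $\sum_{k=0}^{N-1}\gamma^k[W(x_{k+1})-W(x_k)]$ sidesteps any issue with conditional convergence; after multiplying by $\gamma$, re-adding $W(x_0)$, and collecting coefficients, the $W$-differences collapse into a weighted sum of $W(x_k)$ values plus a non-negative boundary remainder $\gamma^N W(x_N)$. Passing $N \to \infty$ is harmless because every quantity on the right-hand side is non-negative, so monotone convergence applies in the extended reals $[0, \infty]$.

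With the identity in hand, positive definiteness splits into cases. At any $x_0 \neq 0$: if $\gamma \in [0, 1)$, the $k=0$ summand of the $W$-series contributes $(1-\gamma)W(x_0) > 0$; if $\gamma = 1$, that contribution vanishes but $\gamma V_1^\pi(x_0) \geq \ell(x_0, \pi(x_0)) \geq Q(x_0) > 0$ since $Q$ is positive definite. At $x_0 = 0$, the standing assumption that the origin is a closed-loop equilibrium under $\pi$ forces $x_k \equiv 0$, so both $W(x_k)$ and $\ell(x_k, \pi(x_k))$ vanish identically and the identity gives $\tilde{\bm{\mathcal{V}}}_\gamma^\pi(0) = 0$. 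As sanity checks, the identity reduces correctly at the extremes: $\tilde{\bm{\mathcal{V}}}_0^\pi = W$, and $\tilde{\bm{\mathcal{V}}}_1^\pi = V_1^\pi$, the latter matching the observation $\tilde V_1^* = V_1^* - W$ noted in Section \ref{sec:formulation}.

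The only real obstacle is bookkeeping: managing the index-shift algebra carefully and accommodating the possibility that $\tilde V_\gamma^\pi$ takes the value $+\infty$. The non-negative representation neatly sidesteps both difficulties — the partial-sum rearrangement is a finite algebraic statement, and the limit is taken term by term on a monotone, non-negative series whose sum is always well-defined in $[0, \infty]$, so no delicate analytic argument is required.
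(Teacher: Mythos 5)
Your proposal is correct and follows essentially the same route as the paper's proof: both rearrange the discounted telescoping sum $\sum_k \gamma^k\bigl[W(x_{k+1})-W(x_k)\bigr]$ via an index shift into a non-negative weighted sum of $W$-values plus the running-cost series, which yields the lower bound $(1-\gamma)W(x_0)+\gamma Q(x_0)$ for $x_0\neq 0$. Your version merely packages this as the exact identity $\tilde{\bm{\mathcal{V}}}_\gamma^\pi(x_0)=(1-\gamma)\sum_{k\geq 0}\gamma^k W(x_k)+\gamma V_\gamma^\pi(x_0)$ and is somewhat more careful than the paper about convergence (partial sums, monotone limits) and about the value at the origin.
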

\begin{proof}
Note that we can re-write the reshaped cost \eqref{eq:cost2} as 
\begin{equation}
    \tilde{V}_\gamma^\pi(x_0) = \sum_{k=0}^\infty \gamma^k \bigg( [W(x_{k+1}) - W(x_k) + \ell(x_k,\pi(x_k))] \bigg),
\end{equation}
where $\{x_k\}_{k=0}^\infty$ is the state trajectory generated by the policy $\pi$ from the initial condition $x_0 \in \X$. By rearranging terms we can rewrite this expression as:
\begin{equation}
    \tilde{V}_\gamma^\pi(x_0) = -W(x_0) + (1-\gamma) \sum_{k=0}^{\infty} \gamma^{k}W(x_{k+1}) + \sum_{k=0}^\infty \gamma^k \ell(x_k,\pi(x_k)) > -W(x_0) +Q(x_0)
\end{equation}
where  we have used the fact that $W$ and $\ell$ are both non-negative, and that $\ell(x_0, \pi(x_0))>Q(x_0)$.
Thus, using this expression we see that 
\begin{equation}
   \tilde{\bm{\mathcal{V}}}_\gamma^\pi(x_0)= W(x_0) +  \gamma \tilde{V}_\gamma^\pi(x_0)>(1-\gamma)W(x_0) + \gamma Q(x_0),
\end{equation}
 Since $Q$ and $W$ are assumed to be positive definite functions this demonstrates that $\bm{\mathcal{V}}_\gamma^\pi$ is in fact positive definite, since a convex combination of positive definite functions is positive definite. The proof is concluded by noting that the choice of $\gamma$ and $\pol$ is arbitrary, and thus the conclusion that $\bm{\mathcal{V}}_\gamma^\pi$ is positive definite holds for all policies and discount factors. 
\end{proof}

\subsection{Proof of Theorem \ref{thm:stability}}
\begin{proof}
Lemma \ref{lemma:lyap_bound} demonstrates that $\tilde{\bm{\mathcal{V}}}_\gamma^\pi = W + \gamma \tilde{V}_\gamma^\pi \colon \X \to \R \cup \{\infty\}$ is a positive definite function. Using the hypotheses of the results with the inequality \eqref{eq:decay2} we obtain
\begin{equation}
    \tilde{\bm{\mathcal{V}}}_\gamma^\pi\big(F(x,\pi(x))\big) -\tilde{\bm{\mathcal{V}}}_\gamma^\pi(x)\leq (-1 + (1-\gamma)[\tilde{C} + \tilde{\delta}])Q(x).
\end{equation}
Note that if $\tilde{C}+ \tilde{\delta} <\frac{1}{1-\gamma}$ then the right hand side of \eqref{eq:blah} will be negative definite, which establishes that $\pol$ asymptotically stabilizes the system.
\end{proof}

\subsection{Proof of Lemma \ref{lemma:decrease}}
\begin{proof}
 Consider a policy $\bar{\pol} \in \Pol$ defined for each $x \in \X$ by:
\begin{equation}
    \bar{\pi}(x) \in \arg \inf_{u \in \U} W(F(x,u)) - W(x) +\ell(x,u) \leq 0,
\end{equation}
where the preceding inequality follows directly from the assumptions made in the Lemma. Next, for a given initial condition $x_0 \in \X$ let $\{x_k\}_{k=0}^\infty$ be the state trajectory generated by $\bar{\pi}$. The corresponding reshaped cost is given by
\begin{align}
    \tilde{V}_\gamma^{\bar{\pi}}(x_0) &= \sum_{k=0}^{\infty}\gamma^{k}\bigg( [W\big(F(x_k,\bar{\pol}(x_k))\big) - W(x_k)]+ \ell(x_k,\bar{\pol}(x_k)) \bigg)\\
    &\leq \sum_{k=0}^{\infty} \gamma^k(0)\\
    &\leq 0,
\end{align}
which demonstrates the desired result, since the initial condition and discount factor were chosen arbitrarily. 
\end{proof}

\section{Additional Experiment Details}\label{sec:experiment_details}
We now provide more details of the experimental results reported in Section \ref{sec:examples} and also additional evaluations. While we have chosen to minimize costs in the main portion of the paper, as this is more consistent with the notation used in the literature on Lyapunov theory and the stability of dynamic programming, most RL algorithms take in rewards that are to be maximized. Thus, for the sake of consistency with practical implementations, in this section we report the reward functions used in our code, which are simply the costs from before with the sign flipped. 

For training from hardware data, we used asynchronous off-policy updates, similar to the framework presented in \cite{gu2017deep}. In particular, we have two separate threads, with one running episodes on the hardware system with the latest available policy and adding the transition data to the replay buffer, and the other one sampling from this buffer and performing the actor and critic updates. We only synchronize the policy network weights at the beginning of each episode. %For the cartpole system, the episodes are $10$ seconds long, and the policy is run at $500$Hz, with each episode consisting of $5000$ data points.

\subsection{A1 Quadruped Results}

To illustrate the efficacy of our approach, we run two sets of experiments with the A1 robot: 1) accurately tracking a target velocity when the gains $k_p$ and $k_d$ are not well tuned (Section \ref{sec:examples}); and 2) accurately tracking the height of the robot with an unknown load attached to it. Here we provide additional details of experiments related to these experiments. For both settings, we use the locomotion controller presented in \cite[Section 3.2]{da2021learning} as our nominal baseline controller. This controller uses a linearized rigid-body model to formulate a quadratic-program (QP)-based controller to track a desired body pose of the robot. Specifically, the following QP is solved to obtain the ground reaction forces $f$ for the feet in contact with the ground:
\begin{align}
    \min_f ~ & \|\mathbf{M}f - \tilde{g} - \ddot{q}_d \|_Q + \|f\|_R \\
    \text{s.t.} ~  &f_z \geq 0,\nonumber \\ 
    &-\mu f_z \leq f_x \leq \mu f_z,  \nonumber \\
     &-\mu f_z \leq f_y \leq \mu f_z, \nonumber
\end{align}

\noindent
where $\mathbf{M}$ is the inverse inertia matrix of the rigid body, $\tilde{g} := [0, 0, g, 0, 0, 0]$ denotes the acceleration due to gravity and $\ddot{q}_d \in \mathbb{R}^6$ are the desired pose accelerations of the robot's body. In particular, the desired accelerations are obtained using a PD controller,
\begin{equation}
    \ddot{q}_d = -k_p(q - q_d) - k_d(\dot{q} - \dot{q}_d), \label{eq:desired ddq}
\end{equation}
\noindent
with $q \in \mathbb{R}^6$ denoting the robot's body pose.

Next, we provide further details for each set of experiments on the A1 robot. 
\noindent
\subsubsection{Velocity Tracking for A1 Quadruped}
When the feedback gains $k_p, k_d \in \mathbb{R}^{6}$ are not well tuned, large tracking errors in the forward speed of the robot can persist as illustrated in Fig. \ref{fig:a1_experiment} (left). To compensate for the increased tracking error, we learn a policy $\pi_\theta$ (MLP with two hidden layers of size $32\times 32$) that outputs an additional acceleration term in \eqref{eq:desired ddq}, making the final desired acceleration $\ddot{q}_d =  -k_p(q - q_d) - k_d(\dot{q} - \dot{q}_d) + \pi_{\theta}$. $\pi_{\theta}$ can therefore be viewed as a learned fine-tuning policy with respect to a model-based controller. The observations for the RL agent include the forward and lateral velocity, the roll and pitch orientation and the desired forward velocity of the robot. The actions include offsets to the desired forward and lateral accelerations. 

The policy $\pi_\theta$ is learned directly on the robot hardware using a CLF $W$ designed for the nominal rigid body dynamics of the robot following the procedure described in \cite{ames2014rapidly}. For training, we use SAC \cite{levine_sac} with the reward $r_k = \frac{\left(W(F(x_k, u_k))-W(x_k) \right)}{\Delta t_k} + \lambda \|u_k\|^2$. The CLF term in the reward allows us to use a discount factor $\gamma=0$, which considerably reduces the complexity of the learning problem. Indeed, within only 5 minutes of data collected from the robot hardware, our method is able to significantly reduce the tracking error in the forward velocity compared to the nominal locomotion controller, as shown in Figure \ref{fig:a1_experiment} (left).

\subsubsection{Height Tracking with an Unknown Load}
 
\begin{figure}
    \centering
    \includegraphics[width=\textwidth]{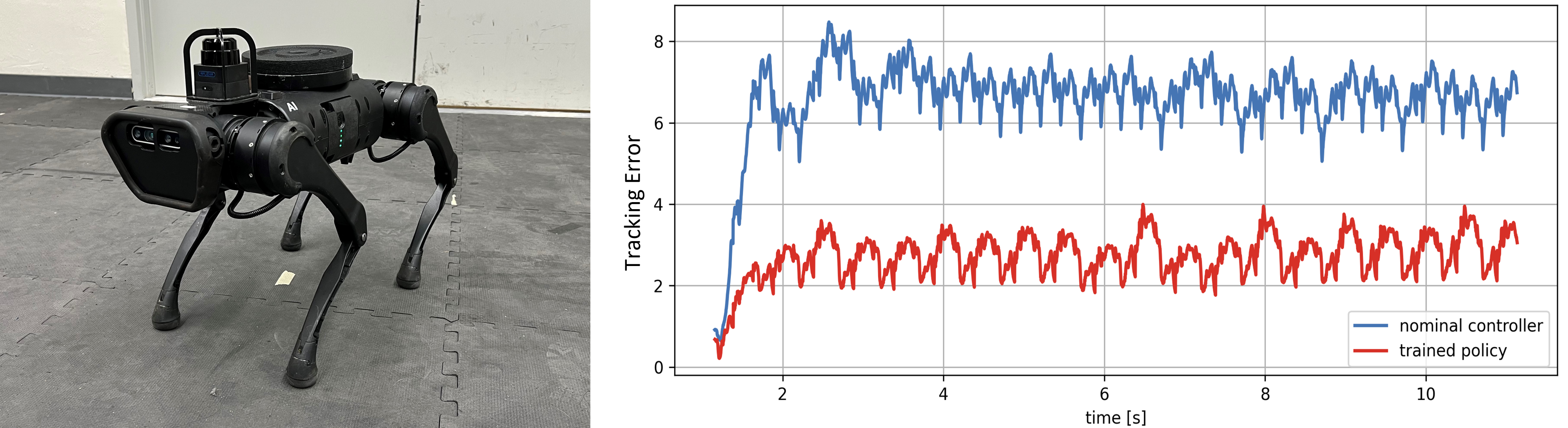}
    \caption{\small Comparison between nominal controller and learned policy after training on 60s of real-world data on the A1 robot with an added 10lb weight. The learned policy is able to significantly reduce the tracking error caused by the added weight.}
    \label{fig:a1-payload-results}
    % \vspace{-10pt}
\end{figure}

\begin{figure}%[t]
    \centering
    % \vspace{-1em}
    % \includegraphics[width=0.5\textwidth]{figures/pend/pend_curves.png}
    \includegraphics[width=.6\textwidth]{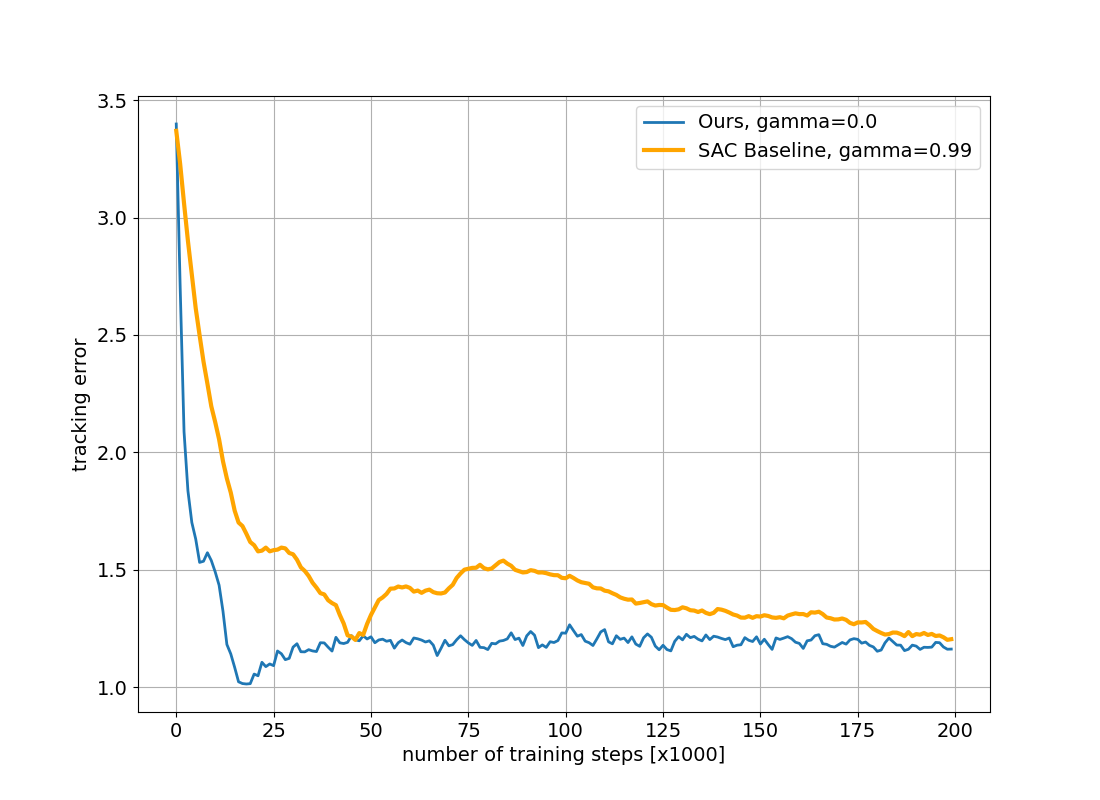}
    \caption{\small Cumulative gait tracking error (lower is better) over 10s rollouts at different stages of the simulated fine-tuning benchmark comparison of the A1 quadruped with an unknown load. In orange, we show the results of fine-tuning using SAC with a standard RL cost which penalizes the distance to the desired gait with a discount factor of $\gamma = 0.99$. In blue, we plot the performance of our cost reshaping method with SAC and a discount factor of $\gamma = 0$. For both cost formulations, we plot the discount factor that led to the best performance. }
    \label{fig:a1_comparison_2}
    % \vspace{-0.5em}
\end{figure}

In this experiment, we use the same base controller and an equivalent offset policy $\pi_{\theta}$ as in the previous set-up and attempt to track a target gait. The CLF is designed to stabilize to the target gait as in the previous experiment. Figure \ref{fig:a1-payload-results} plots the tracking error of the learned controller versus the nominal controller after only 1 minute of training data. As the figure demonstrates, our approach is able to significantly decrease the error to about one-third its nominal value with only a small amount of data. 

To verify that our method out-performs the baseline for this task, we run a simulated benchmark comparison similar to the A1 simulation study for velocity tracking that was presented in Section \ref{sec:examples} of the paper. For this case, we reproduce the unknown load hardware experiment in simulation by adding a 10lb weight to the robot. When testing our method, we again use SAC with the same reward formulation from the hardware experiments above. For the baseline reward, we penalize the distance to the target that we want to track. Figure \ref{fig:a1_comparison_2} depicts the best results that we have been able to obtain for each cost formulation across different discount factors and training hyper-parameters. As Fig. \ref{fig:a1_comparison_2} depicts, our approach quickly converges to a stable walking controller which closely tracks the references after only around $ 22$ thousand steps of the environment. The baseline does not match this performance until it has had access to around $ 48$ thousand steps, and takes much longer to consistently approach the performance of our method.

\subsection{Cartpole Results}
We first provide plots and give additional details for the cartpole experiments presented in Section \ref{sec:examples}. Then, we present a comparison of the performance of our approach with respect to a typical fine-tuning method on a simulator of the cartpole system.

\subsubsection{Additional Details of the Cartpole Hardware Fine-tuning Experiments}

\begin{figure}
\begin{center}
\includegraphics[width=\textwidth,trim={0 1.8cm 0 0},clip]{}
\end{center}
% \vspace{-1em}
\caption{\small Experimental plots of the cart position and pendulum angle of the cartpole system. (left) The policy trained only in simulation fails to bring the real cartpole system to the upright position; (right) by fine-tuning the learned policy with $20 s$ of real-world data using our CLF-based reward function, we obtain a successful policy.}
\label{fig:cartpole-results}
\end{figure}

For the cartpole experiments presented in Section \ref{sec:examples}, we used a Quanser Linear Servo Base Unit with Inverted Pendulum \cite{quanser_products_2021}, with a pendulum length of $60$cm. The system has $4$ states, $x = [p,\ \alpha,\ \dot{p},\ \dot{\alpha}] \in \R^4$, corresponding to the cart position $p$, the pendulum angle $\alpha$, and their respective velocities. The control input is the voltage applied to the motor that actuates the cart $u \in \R$.

We first train a SAC agent in simulation using a `conventional' RL reward that penalizes the distance to the equilibrium, control effort, and includes a penalty if the cart goes off-bounds $r(x_k,u_k) = -0.1\ (5\alpha_k^2 + p_k^2 + 0.05 u_k^2) -5\cdot 10^3 \cdot \mathbbm{1}(|p_k|\geq 0.3)$. The observations of the RL agent are state measurements, the actions are direct voltage commands with limits set to $|u|<10$V as specified by the manufacturer, and the simulation is run at $100$Hz. In order to obtain a stabilizing swing-up policy with this traditional reward, a high discount factor is needed, so we use $\gamma = 0.999$. After around $15$ thousand seconds of simulation data with a learning rate of $5\cdot 10^{-4}$, the RL agent learns to consistently swing-up and balance the pendulum at the upright position in simulation. However, when deployed on the cartpole hardware system, the policy from simulation fails to obtain successful swing-up behaviors due to the sim-2-real gap, as shown in the attached video.

To tackle these issues, we exploit the fact that SAC uses a feedforward neural network to approximate the discounted value function of the problem, and we use this approximate value function (after 18,600 seconds of data) as a CLF candidate to fine-tune the learned policy directly on hardware.

Thus, we learn on hardware a fine-tuning policy $u_\psi$ (MLP with 2 hidden layers of $64 \times 64$) whose actions are added to the ones of the policy trained on simulation $u_\phi$ (MLP with 2 hidden layers of $400 \times 300$). The episodes are $10$ seconds long, and the policy is run at $500$Hz, with each episode consisting of $5000$ data points. The action space limits for this new policy are set to $|u_\psi|< 4$V but we still have a saturation of the total voltage $|u_\phi + u_\psi|<10$V. The reward for this new policy is $\hat{r}(x_k,u_k) = \Delta V_\theta(x_k,u_k) -0.1\cdot(5\alpha_k^2 + p_k^2 + 0.05 u_k^2)$, where $V_\theta$ is the value function network of the SAC agent that was trained in simulation. This allows us to set the discount factor $\gamma = 0$ for the offset policy learned on hardware and therefore greatly reduce the complexity of the learning problem. After only one episode of $10$ seconds of real-world data we obtain a policy that manages to swing-up the pendulum to the upright position, and stabilizes it at the top. However, the behavior near the top is not smooth, and it fails for some different initial conditions. After training with another episode of 10 seconds of data, we obtain a policy that consistently manages to swing-up and balance the pendulum at the top, while the cart stays in-bounds. The plots in Fig. \ref{fig:cartpole-results} (right) show the cart position and the pendulum angle when deploying the fine-tuned policy in the real Quanser cartpole system. The plots in Fig. \ref{fig:cartpole-results} (left) show the results when using the policy that has been only trained in simulation, and how its performance is very different when deployed in simulation vs in hardware. A video with the results of the cartpole experiments can be found in \url{https://youtu.be/l7kBfitE5n8}, and a sequence of snapshots of a successful experiment that uses the fine-tuned policy can be found in Figure \ref{fig:cover}.

\subsubsection{Cartpole Simulation Baseline Comparison with a Typical Fine-tuning Method}

\begin{figure}%[t]
    \centering
    % \vspace{-1em}
    % \includegraphics[width=0.5\textwidth]{figures/pend/pend_curves.png}
    \includegraphics[width=\textwidth]{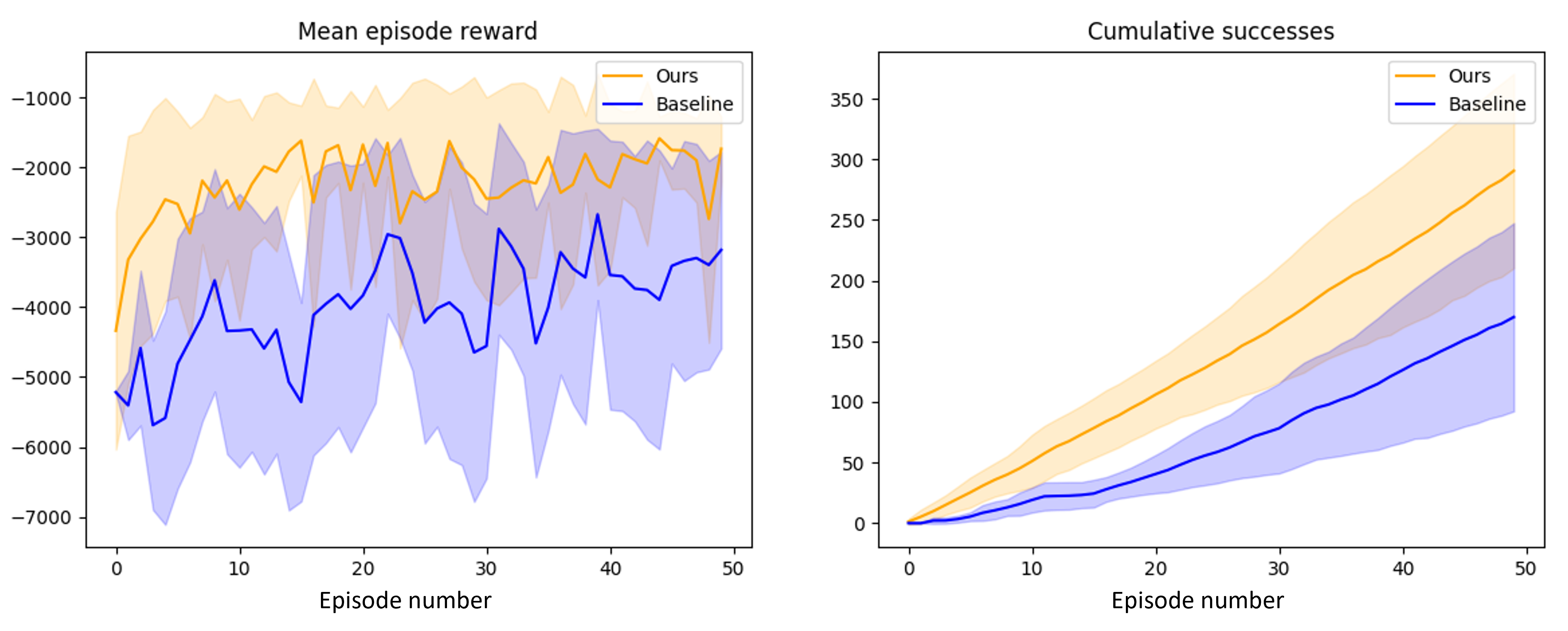}
    \caption{\small Comparison of the simulation results of fine-tuning a cartpole swing-up policy after adding model mismatch. A policy trained on a nominal dynamics model of the cartpole fails when deployed on the new dynamics. In blue, we show the results of continuing to train the agent with the original costs and discount factor. In orange, we fine-tune using our reshaping method with the pre-trained value function and a discount factor of $\gamma=0$. For each episode of training on the new dynamics model, we compare the performance of both methods when running the cartpole from 10 initial conditions: (on the left) the average original reward without the CLF term, and (on the right) the cumulative number of successful swing-ups. The plots show the mean and standard deviation of the results over 10 different training random seeds.}
    \vspace{-1em}
    \label{fig:cartpole comparison}
    % \vspace{-0.5em}
\end{figure}

As explained at the beginning of the paper, previous work has shown that using hardware data to fine-tune a policy that has been pre-trained in simulation is a powerful approach to tackle the sim-2-real gap problem (e.g. \cite{smith2021legged,julian2020never,julian2020efficient,mandi2022effectiveness}). These methods typically take the RL agent trained in simulation and continue its learning process using hardware data, the original cost function and discount factor (see e.g. \cite{smith2021legged}). In contrast, our proposed approach stops the simulation training of $u_\phi$ and learns a smaller offset policy $u_\psi$ from hardware data using a separate learning process that has a different reward function $\hat{r}$ (with the CLF candidate being the learned value function in simulation) and a smaller discount factor (in this case $\gamma = 0$).

% In Figure \ref{fig:cartpole comparison}, we compare in simulation the results of using this standard fine-tuning approach with those obtained with our method. For both approaches we first pre-train a policy and value function on a nominal set of dynamics and then perturb the parameters of the simulator to introduce model mismatch for the fine-tuning phase. Specifically, we increase the weight and friction of the cart by $200\%$ and the mass, inertia and length of the pendulum by a $25\%$. The model mismatch prevents the pre-trained policy from performing successful swing-ups on the new environment. Figure \ref{fig:cartpole comparison} displays the results of running each method 10 times with randomized initial seeds. On the right we plot the cumulative number of successful swing-ups each agent has achieved by each training epoch. As this figure clearly demonstrates, our approach is able to more rapidly learn a reliable swing-up controller than the baseline (see Appendix D for details on how we determine if a successful swing-up is achieved). Moreover, as the plot on the left displays, even though we are no longer optimizing for the original reward, by rapidly converging to a stabilizing controller our method still performs better on the original reward than the benchmark. 

In Figure \ref{fig:cartpole comparison}, we compare in simulation the results of using this standard fine-tuning approach with those obtained with our method. For both approaches, we first pre-train a policy $\pi_\phi$ and value function $V_\theta$ on a nominal set of dynamics using SAC and the reward $r(x_k,u_k) = -0.1\ (5\alpha_k^2 + p_k^2 + 0.05 u_k^2) -5\cdot 10^3 \cdot \mathbbm{1}(|p_k|\geq 0.3)$, and then perturb the parameters of the simulator to introduce model mismatch for the fine-tuning phase. Specifically, we increase the weight and friction of the cart by $200\%$; and the mass, inertia and length of the pendulum by a $25\%$. After doing this, we randomly sample $10$ initial conditions around the downright position ($ -0.05 m \leq p_0 \leq 0.05 m$, $-\pi + 0.05 rad \leq \alpha_0 \leq \pi - 0.05 rad$, $ -0.05 m/s \leq \dot{p}_0 \leq 0.05 m/s$, $ -0.05 rad/s \leq \dot{\alpha}_0 \leq 0.05 rad/s$). We label a trial as success if within 10 seconds of simulation, the pendulum is stabilized in the set $-0.12 rad < \alpha < 0.12 rad$, $-0.3 rad/s < \dot{\alpha} < 0.3 rad/s$ and the cart never gets out of bounds ($|p|< 0.3$). The policy $u_\phi$ trained with data from the nominal dynamics model does not succeed for any of the $10$ initial conditions due to the model mismatch. The baseline in Figure \ref{fig:cartpole comparison} is obtained by emptying the replay buffer and using data from the new environment to continue the training process of $u_\phi$ with the same reward $r(x_k,u_k)$. On the other hand, as with the hardware experiments, our method takes the learned value function $V_\theta$ from the nominal dynamics model and learns an offset policy $u_\psi$ using the modified reward $\hat{r}(x_k,u_k) = \Delta V_\theta(x_k,u_k) -0.1\cdot(5\alpha_k^2 + p_k^2 + 0.05 u_k^2)$. In Figure \ref{fig:cartpole comparison}, we plot for $10$ training random seeds the average original reward $r(x_k,u_k)$ and the cumulative number of successes of the validation episodes ran from the initial conditions mentioned above. The x axis is the number of rollouts of fine-tuning data (each rollout consists of $10$ seconds of data). As this figure clearly demonstrates, our approach is able to more rapidly learn a reliable swing-up controller than the baseline. Moreover, as the plot on the left displays, even though we are no longer optimizing for the original reward, by rapidly converging to a stabilizing controller our method still performs better on the original reward than the benchmark.

The above results show that our approach effectively serves to fine-tune policies when the dynamics of the system change. In fact, we have artificially added a severe model mismatch and shown that we can adapt to the new dynamics with a discount factor of $0$. This is because the original value function is still a `good' CLF candidate for the new system. However, if the change in the dynamics is drastic, or if the overall shape of the motion required to complete the task has to be greatly modified, then the value function from the original dynamics may not be a good CLF candidate, and our method might fail. We have observed that for the cartpole example our method is very robust to variations in the parameters of the cart dynamics (in fact, in the above example we are multiplying both friction and mass of the cart by a factor of $3$), but that if we drastically reduce the length and mass of the pendulum by a $50\%$, our method fails. We hypothesize that this might be related to the underactuated nature of the pendulum dynamics. An interesting direction for future work would therefore be to study under which conditions the original value function retains the CLF properties for a new set of dynamics.

\subsection{Bipedal Walking Results}

\begin{figure}%[b!]
     \centering
     \begin{subfigure}[b]{0.1\textwidth}
         \centering
         \includegraphics[width=\textwidth,trim=16cm 3.5cm 15cm 8cm, clip]{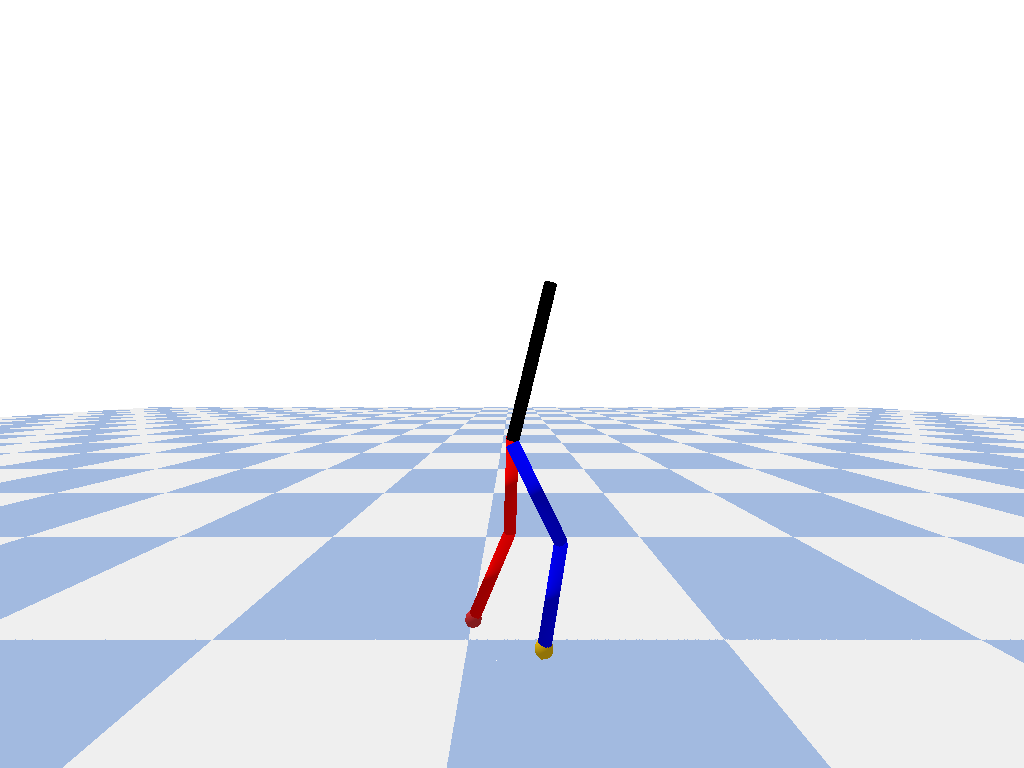}
     \end{subfigure}
     \hspace{-0.3cm}
     \begin{subfigure}[b]{0.1\textwidth}
         \centering
         \includegraphics[width=\textwidth,trim=16cm 3.5cm 15cm 8cm, clip]{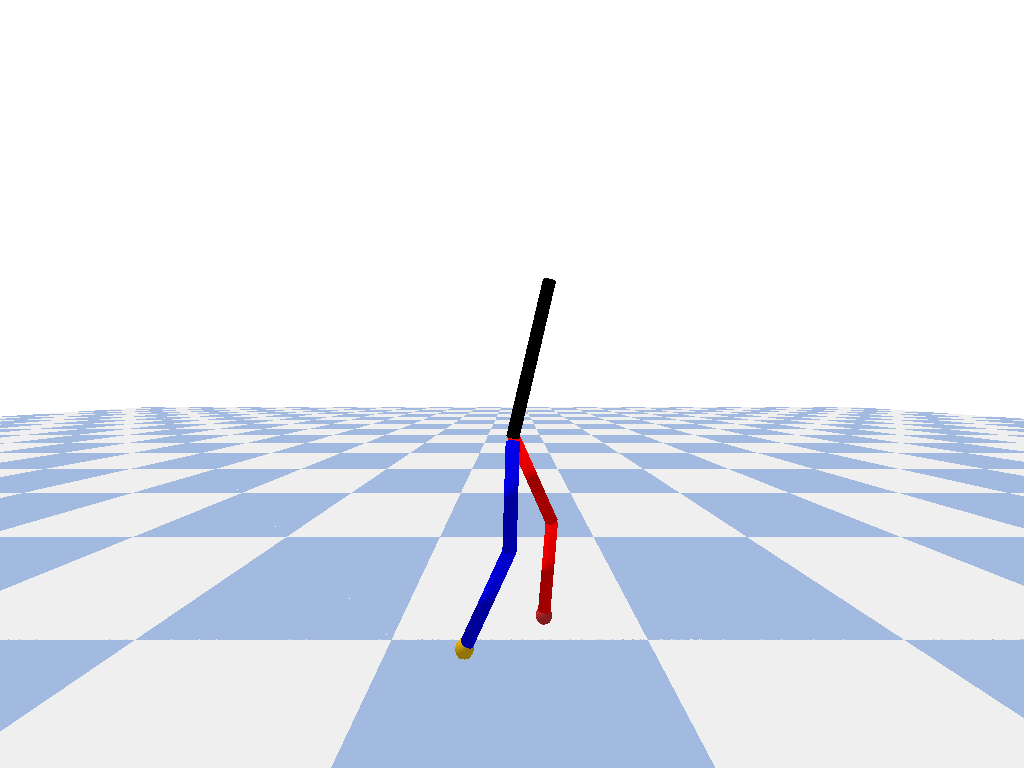}
     \end{subfigure}
     \hspace{-0.3cm}
     \begin{subfigure}[b]{0.1\textwidth}
         \centering
         \includegraphics[width=\textwidth,trim=16cm 3.5cm 15cm 8cm, clip]{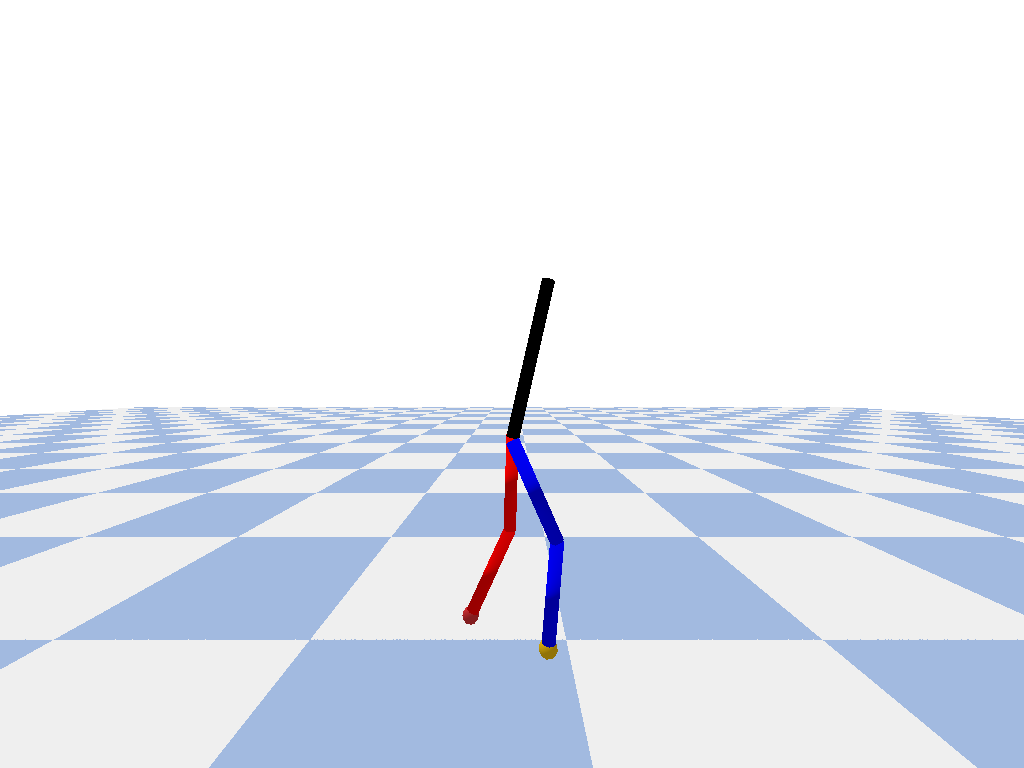}
     \end{subfigure}   
     \hspace{-0.3cm}
     \begin{subfigure}[b]{0.1\textwidth}
         \centering
         \includegraphics[width=\textwidth,trim=16cm 3.5cm 15cm 8cm, clip]{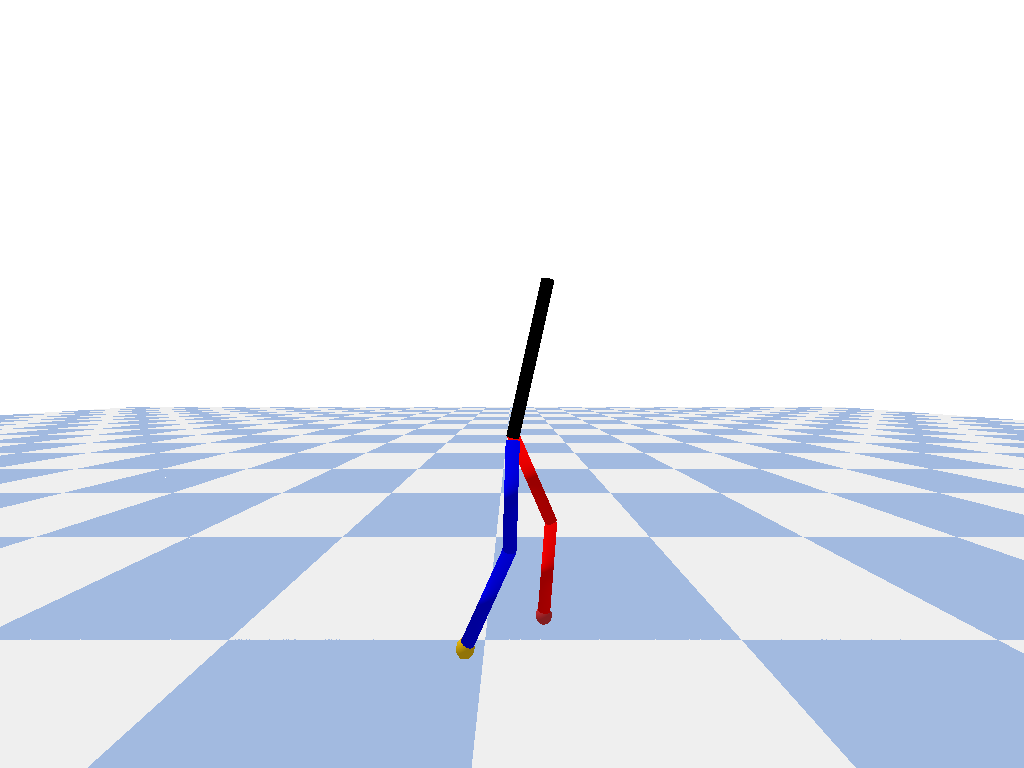}
     \end{subfigure}   
     \hspace{-0.3cm}
     \begin{subfigure}[b]{0.1\textwidth}
         \centering
         \includegraphics[width=\textwidth,trim=16cm 3.5cm 15cm 8cm, clip]{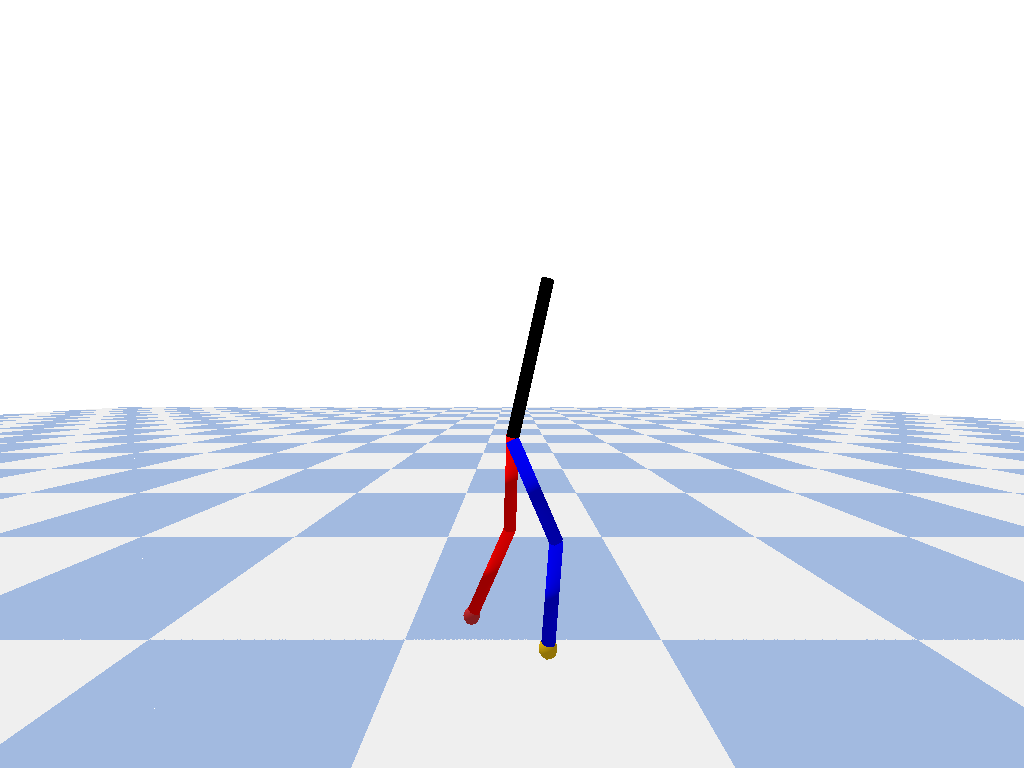}
     \end{subfigure}   
     \hspace{-0.3cm}
     \begin{subfigure}[b]{0.1\textwidth}
         \centering
         \includegraphics[width=\textwidth,trim=16cm 3.5cm 15cm 8cm, clip]{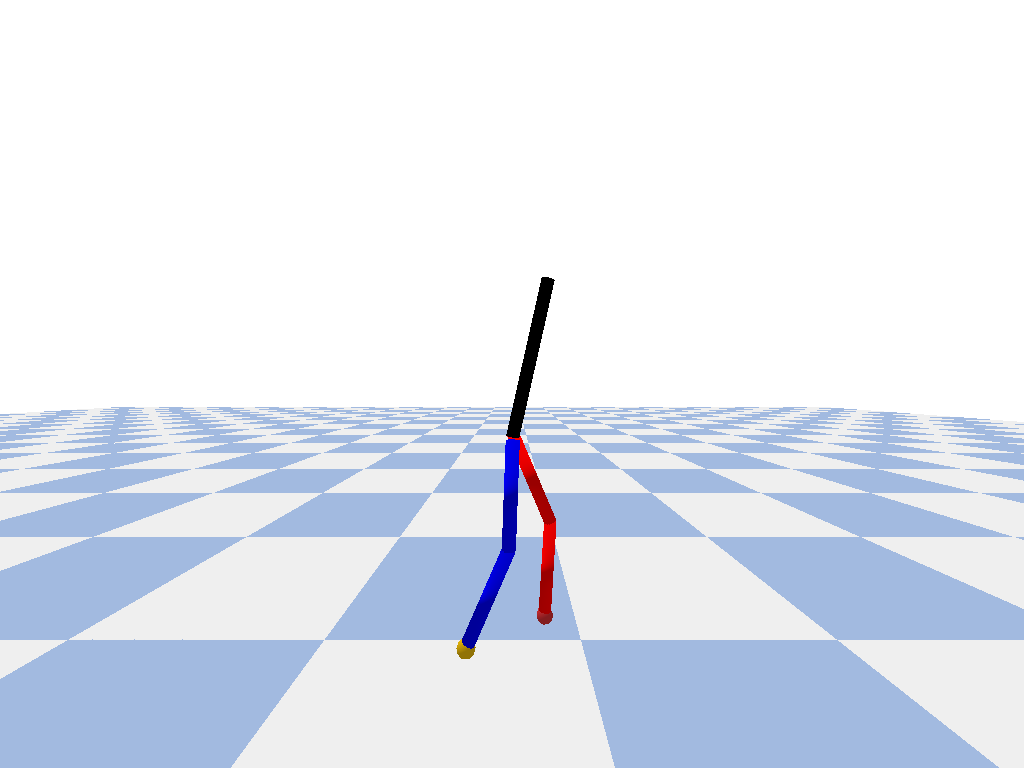}
     \end{subfigure} 
     \hspace{-0.3cm}
     \begin{subfigure}[b]{0.1\textwidth}
         \centering
         \includegraphics[width=\textwidth,trim=16cm 3.5cm 15cm 8cm, clip]{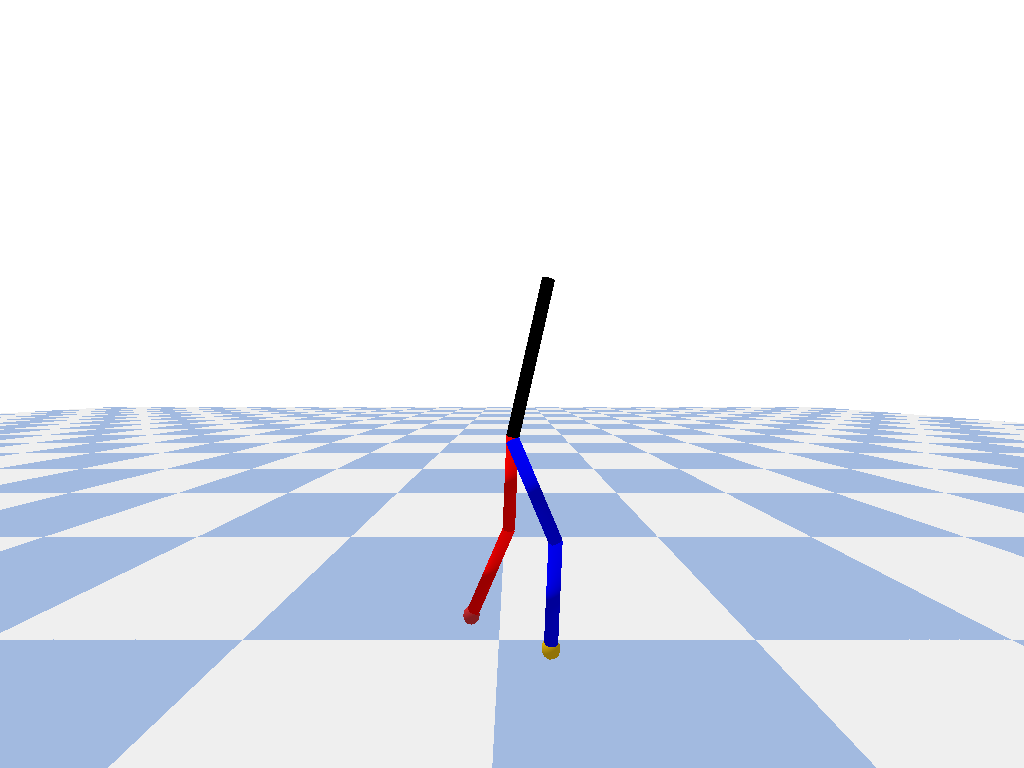}
     \end{subfigure}
     \hspace{-0.3cm}
     \begin{subfigure}[b]{0.1\textwidth}
         \centering
         \includegraphics[width=\textwidth,trim=16cm 3.5cm 15cm 8cm, clip]{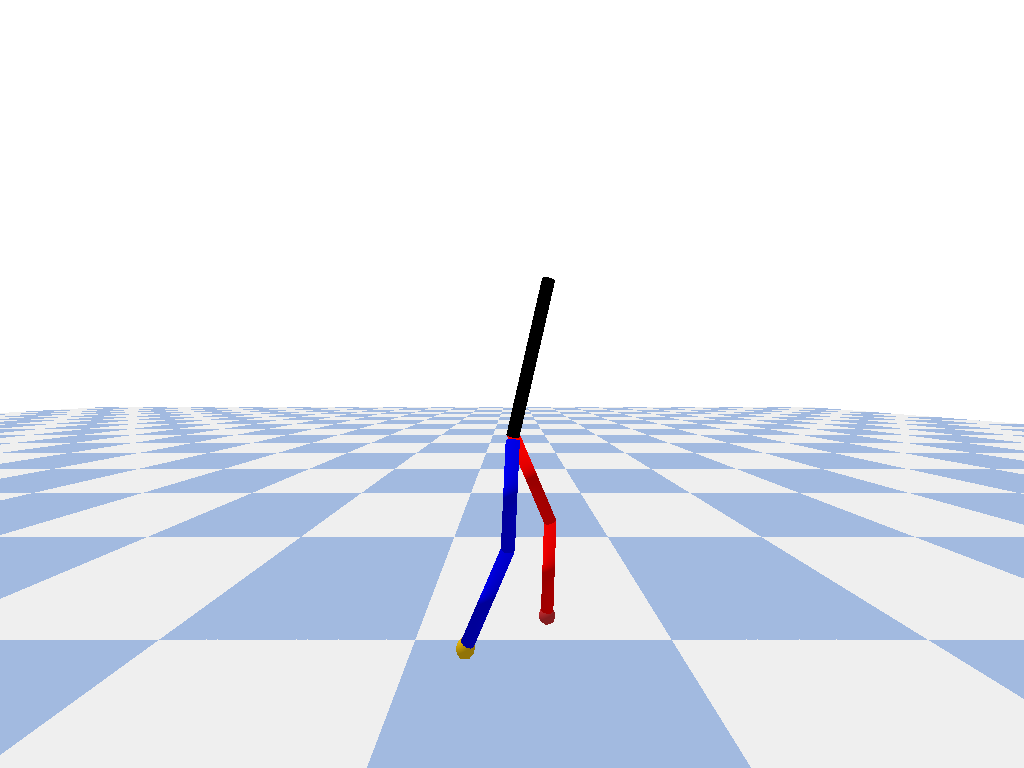}
     \end{subfigure}
     \hspace{-0.3cm}
     \begin{subfigure}[b]{0.1\textwidth}
         \centering
         \includegraphics[width=\textwidth,trim=16cm 3.5cm 15cm 8cm, clip]{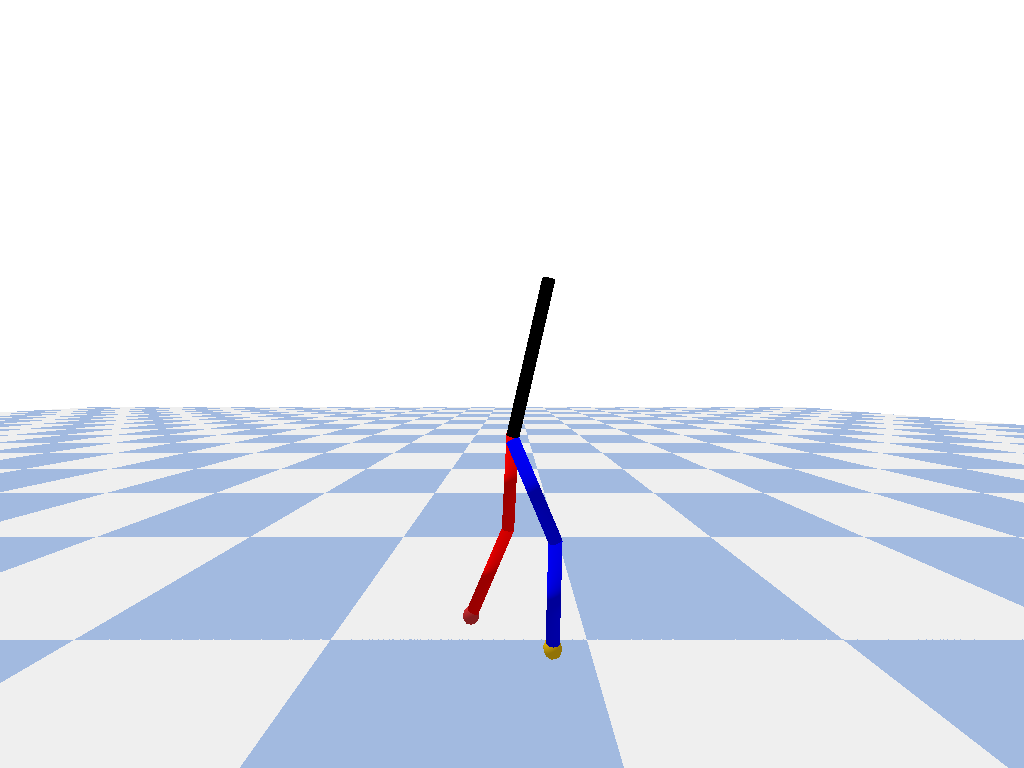}
     \end{subfigure}
     \hspace{-0.3cm}
     \begin{subfigure}[b]{0.1\textwidth}
         \centering
         \includegraphics[width=\textwidth,trim=16cm 3.5cm 15cm 8cm, clip]{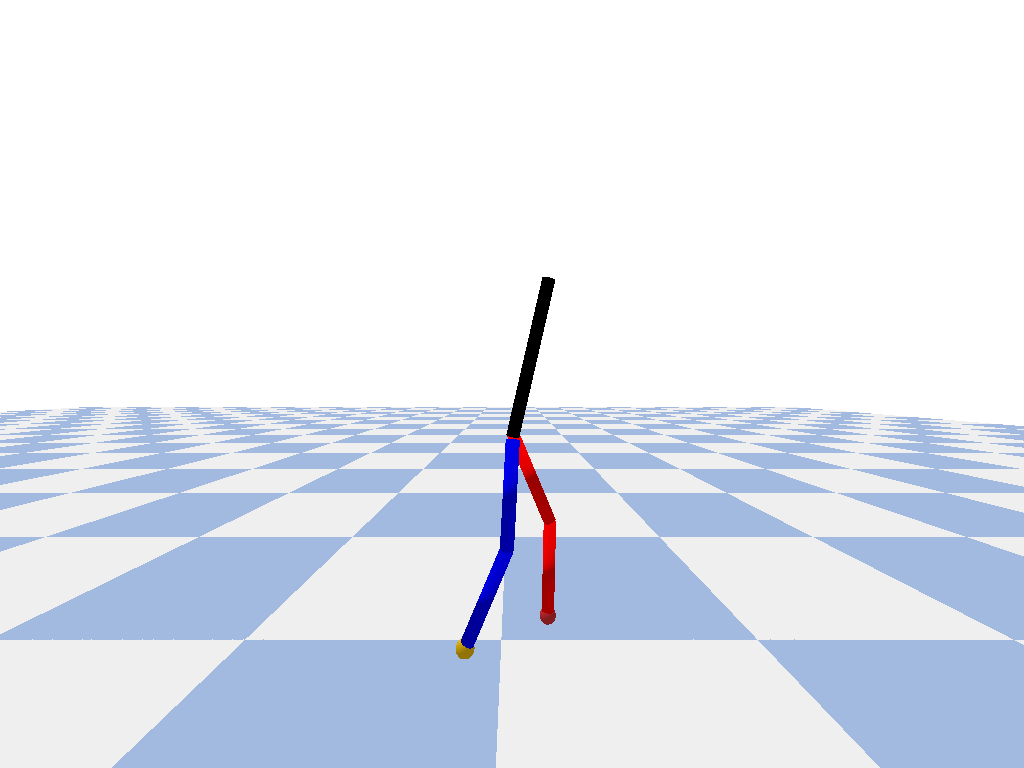}
     \end{subfigure}
     
     \vskip\baselineskip
     \begin{subfigure}[b]{\textwidth}
         \centering
         \includegraphics[width=\textwidth, trim=0.2cm 0cm 0.2cm 0cm, clip]{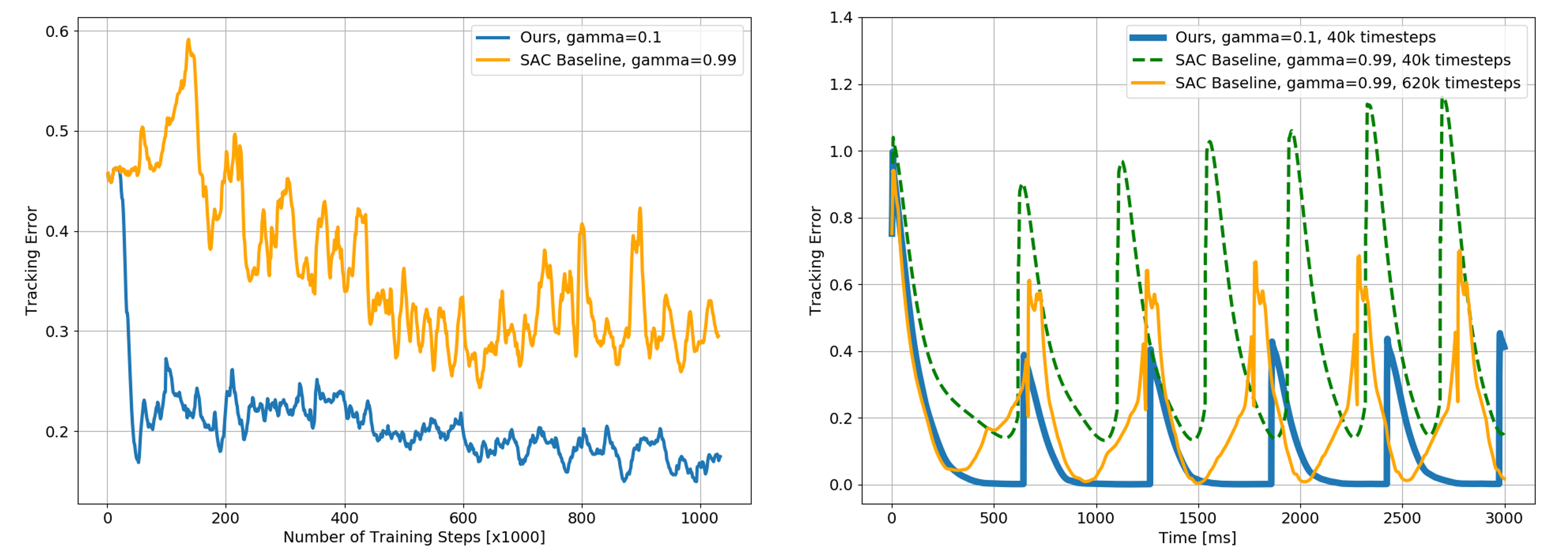}
     \end{subfigure}

     \caption{(Top) Snapshots of RABBIT \cite{chevallereau2003rabbit}, a five-link bipedal robot, successfully walking with our learned controller in the PyBullet simulator \cite{coumans2019}. (Bottom-Left) Average tracking error (lower is better) per episode at different stages of the training process when fine-tuning a model-based walking controller under model mismatch. In blue, using our CLF-based reward formulation and SAC, the robot learns a stable walking gait with only 40k steps (40 seconds) of training data. In orange, with a baseline that uses a typical reward penalizing the tracking error to the target gait, the training takes longer to converge and does not achieve the same performance. The results show the best performance for both method across different discount factors and training hyper-parameters. (Bottom-Right) Comparison of the tracking error of roll-outs of different learned walking policies. In blue, a policy learned with 40k steps of the environment using our CLF-based reward. In dashed green, a policy learned using the baseline reward with 40k steps of the environment. In orange, a policy learned using the baseline reward with 620k steps of the environment (best baseline policy). The jumps in tracking error occur at the swing-leg impact times. The policy learned with our reward formulation clearly outperforms the baseline, even when the baseline has 15 times as much data.}
     \label{fig:rabbit-results}
\end{figure}

In this section, we provide further details on applying our design methodology to fine-tune a model-based walking controller for a bipedal robot. As mentioned in Section \ref{sec:examples}, we first design a CLF around the target gait using the nominal model as in \cite{ames2014rapidly} to be used in our reward formulation. As a benchmark comparison, we also train policies with a typical reward which penalizes the distance to the target motion. For both approaches we use the SAC algorithm to optimize the policy. We plot the best performance we have been able to obtain from each method by sweeping across different discount factors and algorithm hyper-parameters in Figure \ref{fig:rabbit-results}. In particular, the top of this Figure depicts snapshots of the stable walking controller our method obtains after only 40k steps of the environment, which corresponds to only 40 seconds of data given the 1kHz frequency of the controller. The bottom left depicts the average tracking error during the training process for both methods. Finally, the bottom-right plots the tracking error over a few representative rollouts. Note that the tracking error for both methods `jumps' each time one of the feet impacts the ground. These jumps occur when the swing-foot impacts with the ground and are an unavoidable feature of the environment. Thus, in this context a stable walking controller needs to rapidly converge to the target motion over the course of the next step to maintain stability of the walking motion. As the learning curve demonstrates, our approach is able to significantly reduce the average tracking error per episode after only 40k steps of the environment, while the baseline does not reach a similar level of performance even after 1.2 million steps. As the rollouts in the bottom-right demonstrate, our method learns a desirable tracking controller which smoothly decreases the tracking error between each impact event after only 40 thousand steps. In contrast, after 40 thousand steps the baseline controller diverges from the target motion, corresponding to a fall after only a few steps. After 620 thousand steps, the baseline controller is able to maintain the stability of the walking motion, yet the tracking performance is notably worse than our method at 40 thousand steps, despite having access to around 15 times as many samples.

\subsection{Inverted Pendulum Results}
 The states of the system are $x = (\theta,\dot{\theta}) \in \R^2$, where $\theta$ is the angle of the arm from the vertical position, and the input $u \in \R$ is the torque applied to the joint. In each of the reinforcement learning experiments reported in Section \ref{sec:examples} for this system we sample initial conditions over the range $-\pi \leq \theta \leq \pi$ and $ -0.1<\dot{\theta} <0.1$.
 %We use the soft actor critic (SAC) algorithm \cite{haarnoja2018learning} and each training epoch consisted of 5 episodes with 100 simulation steps each, where each time step for the simulator is $0.1$ seconds. To determine whether a given controller stabilizes the system we randomly sample 20 initial conditions and see if each trajectory reaches the set $\{ x\in \R^n : \|x\|_2<0.05\}$ within 20 seconds of simulation. In Figure \ref{fig:inverted pendulum} we compare training curves across experiments with different discount factors and costs, thus we normalize each training curve so that a reward of $0$ indicates the average reward during the first epoch, while a reward of $1$ is the largest average reward obtained across all epochs. On each training curve, the black dot indicates the first controller that was able to stabilize the system, using the method described above. For each scenario we swept across values of the discount factor in increments of $0.05$ from $\gamma =0$ to $\gamma =0.95$ and also tried $\gamma =0.99$.

 \begin{figure}%[t]
    \centering
    \vspace{-1em}
    \includegraphics[width=0.9\textwidth,trim={0 1.3cm 0 0},clip]{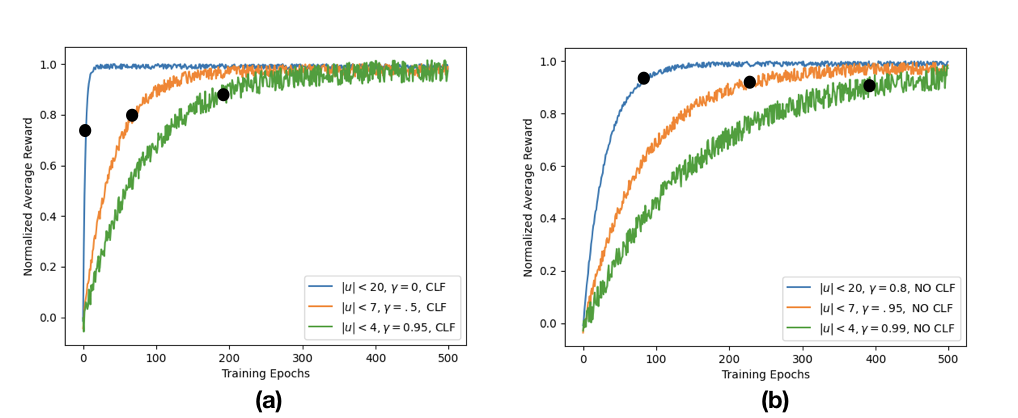}
    \caption{\small Learning curves for an inverted pendulum system under different input constraints. The curves plotted correspond to the smallest discount factors that led to stabilizing policies. On the left, the obtained learning curves use a CLF in the reward. On the right, the reward does not include the CLF term. The black dots denote the first stabilizing policy for each training. For each setting we plot the learning curve for the discount factor that achieved the best performance.  }
    \vspace{-1em}
    \label{fig:inverted pendulum}
    % \vspace{-0.5em}
\end{figure}
 
We first train a stabilizing controller using a `typical' cost function of the form $r_k = -\|x_k\|_2^2 -0.1\|u_k\|_2^2$, and then train a controller using the reshaped cost $r_k = -\left[W(F(x_k,u_k)) -W(x_k)\right]-\|x_k\|_2^2 - 0.1\|u_k\|_2^2$.  We use the soft actor critic (SAC) algorithm \cite{haarnoja2018learning} and each training epoch consisted of 5 episodes with 100 simulation steps each, where each time step for the simulator is $0.1$ seconds. For both forms of cost function, we sweep across different values of discount factors (from $\gamma =0$ to $\gamma =0.95$ in increments of $0.05$ and also tried $\gamma =0.99$)
to $1)$ determine which values of discount factors lead to stabilizing policies and $2)$ which discount factor allows the agent to learn a stabilizing controller most rapidly. To determine whether a given controller stabilizes the system we randomly sample 20 initial conditions and see if each trajectory reaches the set $\{ x\in \R^n : \|x\|_2<0.05\}$ within 20 seconds of simulation. For each scenario, the smallest discount factor that lead to a stabilizing controller was also the discount factor that cause the agent to learn a stabilizing controller with the least amount of data. 

 Training curves for each of the critical values of the discount factor are depicted in Figure \ref{fig:inverted pendulum} for each of the cost formulations and input constraints. Each curve indicates the average reward per epoch across 10 different training runs and reports the best results for each scenario after an extensive hyper-parameter sweep. We normalize each training curve so that a reward of $0$ indicates the average reward during the first epoch, while a reward of $1$ is the largest average reward obtained across all epochs. On each of the training curves the black dot denotes the first training epoch at which a stabilizing controller was obtained. 
 
As illustrated by the plots in Figure \ref{fig:inverted pendulum} (a), the addition of the CLF enables our method to more rapidly learn a stabilizing controller in each setting and consistently decreases the amount of data that is needed to learn a stabilizing controller, even when $W$ is not a global CLF for the system. However, the effects are more pronounced when the input constraints are less restrictive and $W$ is a better candidate CLF. For example, when $|u| <20$ our approach is able to learn a stabilizing controller in $5$ iterations, whereas it takes $92$ iterations with the original cost (our approach takes $\sim 5.4 \%$ as many samples). Meanwhile when $|u|<4$ our approach takes $198$ iterations while the original cost takes $389$ iterations (our approach takes $\sim 51\%$ as many samples).Moreover, we observe that larger discount factors are required when $|u|\leq7$ and $|u|\leq4$, as $W$ becomes a poorer candidate CLF for these cases.

\end{document}